\newcolumntype{Y}{>{\centering\arraybackslash}X}
\newcolumntype{Z}{>{\raggedleft\arraybackslash}X}
\theoremstyle{definition}
\newtheorem{assumption}{Assumption}[section] 
\definecolor{salmon}{RGB}{234,153,153}
\definecolor{cornflowerblue}{RGB}{100,149,237}
\definecolor{darkgreen}{rgb}{0.0, 0.5, 0.0}
\definecolor{darkblue}{rgb}{0, 0, 0.5}
\theoremstyle{plain}
\newtheorem{theorem}{Theorem}[section]
\newtheorem{lemma}[theorem]{Lemma}
\def\eqref#1{equation~\ref{#1}}
\def\1{\bm{1}}
\def\vz{{\bm{z}}}
\DeclareMathAlphabet{\mathsfit}{\encodingdefault}{\sfdefault}{m}{sl}
\SetMathAlphabet{\mathsfit}{bold}{\encodingdefault}{\sfdefault}{bx}{n}
\newcommand{\absl}[1]{\left| #1 \right|}
\title{Representation Invariance and Allocation: When Subgroup Balance Matters}
\date{}
\author[1]{Anissa Alloula\thanks{Correspondence to: \texttt{anissa.alloula@dtc.ox.ac.uk}}}
\author[2]{Charles Jones}
\author[1]{Zuzanna Wakefield-Skorniewska}
\author[1]{\\Francesco Quinzan${}^{\dagger}$}
\author[1]{Bart\l omiej W. Papie\.z${}^{\dagger}$}
\affil[1]{University of Oxford}
\affil[2]{Imperial College London}
\affil[ ]{}
\affil[ ]{\small \textit{${}^{\dagger}$joint senior contributors}}
\begin{document}

\maketitle

\begin{abstract}
\noindent Unequal representation of demographic groups in training data poses challenges to model generalisation across populations. Standard practice assumes that balancing subgroup representation optimises performance. However, recent empirical results contradict this assumption: in some cases, imbalanced data distributions actually improve subgroup performance, while in others, subgroup performance remains unaffected by the absence of an entire subgroup during training. We conduct a systematic study of subgroup allocation across four vision and language models, varying training data composition to characterise the sensitivity of subgroup performance to data balance. We propose the \textit{latent separation hypothesis}, which states that a partially fine-tuned model’s dependence on subgroup representation is determined by the degree of separation between subgroups in the latent space of the pre-trained model. We formalise this hypothesis, provide theoretical analysis, and validate it empirically. Finally, we present a practical application to foundation model fine-tuning, demonstrating that quantitative analysis of latent subgroup separation can inform data collection and balancing decisions.

\end{abstract}

\section{Introduction}
\label{intro}
There is a wide consensus in machine learning that model performance improves monotonically with increasing training data \citep{rosenfeldCONSTRUCTIVEPREDICTIONGENERALIZATION2020,kaplanScalingLawsNeural2020}. This principle, formalised through dataset scaling laws, has guided much of the recent progress in model training. However, real-world data rarely satisfies the assumption of being independent and identically distributed (i.i.d.) \citep{arjovsky2020invariantriskminimization,wangFairnessenhancingMixedEffects2023}. Instead, datasets are composed of clusters of correlated samples, corresponding to subgroups or domains. In the medical domain, clusters may correspond to demographic categories, while in image datasets they may reflect camera types, and in multilingual corpora they may represent language varieties.

In such cases, the question becomes more nuanced: how does model performance on a particular subgroup scale as its representation in the training data increases? While intuition suggests that a higher proportion of subgroup-specific data should directly improve performance on that subgroup, recent studies have revealed surprising counterexamples \citep{rolfRepresentationMattersAssessing2021a,wengsexbased2023,cevorapopulationshift2024}, where increasing subgroup allocation had little or even no effect. This challenges the widely held view that dataset rebalancing is always a reliable solution \citep{idrissiSimpleDataBalancing2022}.

Therefore, understanding the relationship between subgroup allocation and subgroup performance remains an important open question. When concerned about model fairness, practitioners must decide whether to conduct certain interventions, like collecting balanced data across demographic groups, or resampling or augmenting their dataset, potentially at the cost of reduced overall performance \citep{rajiActionableAuditingInvestigating2019,idrissiSimpleDataBalancing2022}. In domain generalisation, practitioners must weigh whether fine-tuning on a smaller set of domain-specific data will yield better deployment performance than fine-tuning on a larger set of general data \citep{hulkundDataS^3DatasetSubset2025}. More broadly, given the cost of data collection and annotation, knowing when subgroup representation matters can guide whether to prioritise general high-quality data or group-specific data.

In this work, we aim to understand how the allocation of data across subgroups affects subgroup performance at inference time, given a fixed training budget. Through extensive experiments in vision and language tasks, we find that subgroup sensitivity to allocation varies dramatically across datasets, models, and attributes. We probe \textit{why} these differences arise and put forward a novel hypothesis: the degree to which redistributing subgroup data improves subgroup performance is determined by how strongly the set of subgroups are separated in the pre-trained model’s latent representations. We provide both theoretical justification and empirical evidence of this hypothesis. 

\begin{figure}[t!]
    \centering
    \includegraphics[width=1\linewidth]{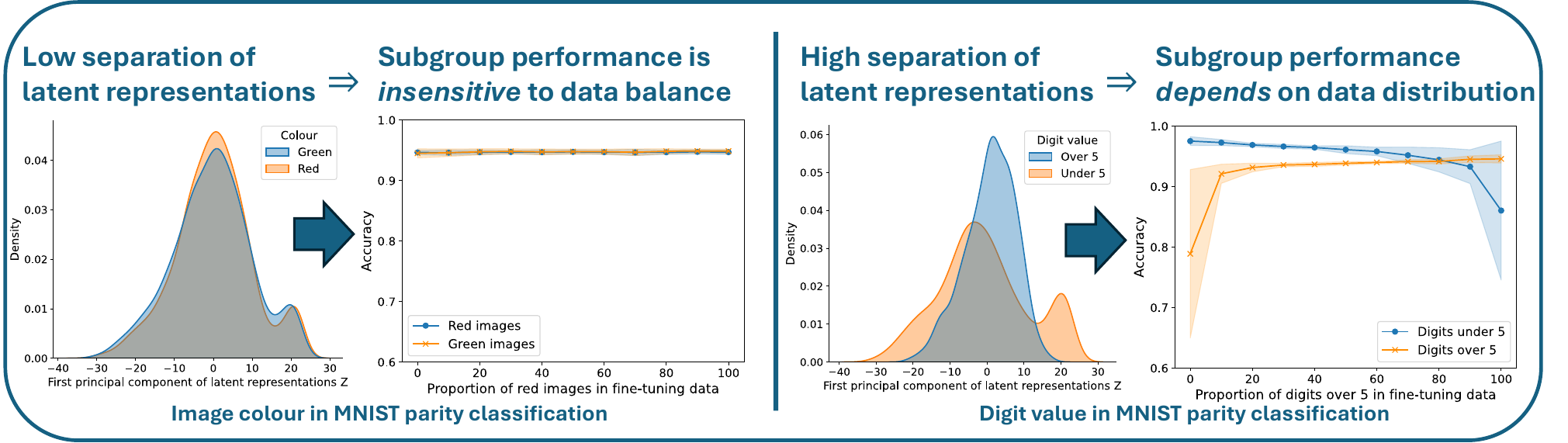}
    \caption{\textbf{Model sensitivity to data balance depends on latent separation of subgroups.} Left plots show PCA projections of latent representations of MNIST parity classifiers. Right plots show subgroup accuracy as training data allocation changes.}
    \label{fig:mnist_reps_allocs}
\end{figure}

\paragraph{Our contributions are:}
\begin{itemize}
    \item[\S\ref{sec:current hypotheses}] We demonstrate that widely held explanations for sensitivity to subgroup allocation fail to match empirical behaviour.
    \item[\S\ref{sec:theory}] We derive a theoretical upper bound on sensitivity to subgroup allocation based on subgroup separation in the pre-trained model’s latent space.
    \item[\S\ref{sec:experiments}]We show empirically that sensitivity to subgroup allocation is significantly correlated to the distance between two subgroups in the pre-trained model's latent representations. 
    \item[\S\ref{sec: vlm fine-tuning}] We show how our findings can guide dataset selection decisions to improve fairness in a practical case-study fine-tuning a vision-language foundation model.
\end{itemize}

\section{Related work}
\label{related_work}
\paragraph{Dataset scaling is not straightforward when the train and deployment settings are not i.i.d.}
We are broadly interested in the relationship between model training data and model performance. Research in this area has investigated different aspects of data (e.g., size, composition, or individual points) and different performance metrics (e.g., overall loss, fairness, or domain-specific accuracy) \citep{hashimotoModelPerformanceScaling2021}. While dataset scaling laws have shown that performance improvements follow predictable power law trends \citep{rosenfeldCONSTRUCTIVEPREDICTIONGENERALIZATION2020,kaplanScalingLawsNeural2020}, this relationship becomes more complex when the training and test data are drawn from different distributions. In such cases more data is not necessarily better. For instance, \citet{hulkundDataS^3DatasetSubset2025} and \citet{shenDataAdditionDilemma2024} both show that when optimising for a specific deployment setting, a subset of the data can yield better performance than the full dataset. Similarly, \citet{diazfairnessscalinglaws} argue that scaled training data can have a negative impact depending on the evaluation metrics and subpopulations considered.

\paragraph{Subgroup data scaling through the lens of fairness}
This problem has also been studied indirectly in the field of fairness, where data are grouped into subgroups (e.g., based on demographic attributes), and one investigates how training data composition (i.e., number/proportion of samples from certain subgroups) affects fairness (i.e., some metric based on model performance on certain subgroups). The prevailing assumption is again that more subgroup-specific training data leads to improved performance on that subgroup \citep{rajiActionableAuditingInvestigating2019,chenWhyMyClassifier2018}. When more data cannot be collected, the standard intervention is to rebalance the model training data by under- or over-sampling samples from certain subgroups \citep{idrissiSimpleDataBalancing2022}. Many works show that this simple method can yield remarkable fairness improvements both when training from scratch \citep{idrissiSimpleDataBalancing2022} and when fine-tuning (potentially biased) pre-trained models \citep{kirichenkoLASTLAYERRETRAINING2023,wangOverwritingPretrainedBias2023,alabdulmohsinCLIPBiasHow2024}. 

\paragraph{Inconsistencies in subgroup balancing results}However, a growing body of work argues that balancing data does not necessarily improve fairness, and that it can even be detrimental. \citet{schrouffMindGraphWhen2024} use a causal framework to show conditions under which data balancing will \textit{not} improve model fairness. Similarly, {\citet{schwartz-stanovsky-2022-limitations}, \citet{rohFairBatchBatchSelection2021}, \citet{qiao2025grouprobust}, and \citet{claucichFairnessDeepEnsembles2025} show that fairness is not necessarily maximized at uniform group ratios, with the latter arguing that this is due to unequal task difficulty across groups}. \citet{wengsexbased2023} and \citet{cevorapopulationshift2024} even show cases where a model's performance on female medical images remains constant (and sometimes decreases) as the proportion of female images in the training set increases. Loss-reweighting based approaches such as group distributionally robust optimisation \citep{sagawaDistributionallyRobustNeural2020} are based on a related principle: rather than balancing group size, they reweight groups with higher losses. However, loss-based methods also exhibit failure modes \citep{zongMEDFAIRBENCHMARKINGFAIRNESS2023}, and it remains unclear whether up-weighting data from poor-performing groups necessarily improves performance on those groups. Together, these studies reflect an emerging trend in fairness research, that different subgroups have distinct properties and causes of under-performance, and therefore respond differently to interventions like data balancing or loss re-weighting \citep{wangIdentitiesAreNot2025a, alloula2025subgroups, jonesRethinkingFairRepresentation2024,yangChangeHardCloser2023}. However, without a causal framework (which is difficult to apply in practice) \citep{jonesRethinkingFairRepresentation2024,schrouffMindGraphWhen2024} or direct experimentation, it is difficult to determine \textit{a priori} whether balancing will improve fairness.  

\paragraph{Impact of subgroup allocation on subgroup performance}
Our work differs from fairness-oriented studies in that we address a more fundamental question: how does subgroup allocation affect subgroup performance? We argue that understanding this is prerequisite for tackling fairness concerns and implementing any bias mitigation methods. Despite its importance, this question has received little direct attention, and as discussed above, it is not clear whether redistributing data from under-represented or poorly performing groups reliably improves performance for those groups. \citet{rolfRepresentationMattersAssessing2021a} take a first step by fitting a per-group power-law scaling model describing the impact of subgroup and total training data size on subgroup performance. Similarly to the fairness papers, they show that the optimal allocation varies across datasets and tasks, and is not necessarily balanced. Our work builds on theirs in several ways, but differs crucially in that we propose a (theoretically-grounded) explanation for \textit{why} subgroup allocation impacts subgroup performance so variably. {Unlike \citet{rolfRepresentationMattersAssessing2021a}, who must train many models across dataset sizes and allocations to fit their empirical scaling law, we identify an underlying mechanism driving these effects.} This enables us to determine, for a given model and subgroup, whether subgroup allocation is likely to matter, and can help guide fine-tuning strategies to maximise subgroup performance. 

\section{Problem setting}
\label{background}

To study the impact of subgroup allocation on subgroup performance, we consider supervised fine-tuning of a pre-trained model on a dataset of input--label pairs 
\((x,y) \in \mathcal{X} \times \mathcal{Y}\). The data are drawn from an underlying 
distribution \(\mathcal{P}\), which we randomly split into three disjoint subsets: 
\(\mathcal{P}_{\text{pre-train}}\), \(\mathcal{P}_{\text{fine-tune}}\), and 
\(\mathcal{P}_{\text{test}}\). We study settings where the training and test distributions are annotated with \(m\) 
binary attributes \(\{A^{(1)}, \dots, A^{(m)}\}\) which can represent demographic or other sample-level characteristics. Each attribute \(A^{(j)}\) induces a binary partition of the data into two subgroups, \(a^{(j)}_0 = \{(x,y) \mid A^{(j)} = 0\}\) and \(a^{(j)}_1 = \{(x,y) \mid A^{(j)} = 1\}\). Examples of attributes include gender (male/female), imaging view (frontal/lateral), or dataset source (scanner A/scanner B).

For each subgroup, we record its base population prevalence under $\mathcal{P}$ :
$\gamma^{(j)}_k 
= \Pr_{(X,Y,A^{(j)}) \sim \mathcal{P}}[A^{(j)} = k], 
\quad k \in \{0,1\}.$ During fine-tuning, we investigate the impact of manipulating the prevalence of each subgroup, which we refer 
to as \textbf{subgroup allocation}. We assume there is a fixed fine-tuning budget of \(K\) examples \(\{(x_i, y_i, A_i^{(j)})\}_{i=1}^K\). For each subgroup \(a^{(j)}_k\), following \citet{rolfRepresentationMattersAssessing2021a}, we define its allocation as the fraction of the fine-tuning dataset coming from subgroup \(a^{(j)}_k\):
\[
\alpha^{(j)}_k := \frac{1}{K} \sum_{i=1}^{K} \mathbf{1}[A_i^{(j)} = k], \quad k \in \{0,1\}.
\]

Our objective is to characterise how subgroup-specific test performance (for instance the loss $\ell^{(j)}_k$) depends on these allocations. The central question of this work is thus:
how does $\ell^{(j)}_k$ vary with $\alpha^{(j)}_k$ and why does this sensitivity differ across attributes and subgroups?

\section{Current explanations are unreliable}
\label{sec:current hypotheses}
We begin by systematically compare how subgroup allocation affects subgroup performance across various empirical settings. We explore whether existing hypotheses, for instance that under-represented subgroups will benefit from increased allocation, can explain the patterns we observe.

\subsection{Experimental setup}

\paragraph{Model training with different allocations} 
We start with a baseline model trained on a random subset of the original dataset and subsequently fine-tune it on datasets for which we systematically vary the allocations $\alpha$ of different subgroups. For each attribute $A^{(j)}$, we partition the dataset into binary subgroups $g^{(j)}_0$ and $g^{(j)}_1$. For each attribute and dataset, we create 11 fine-tuning datasets, varying the allocation $\alpha^{(j)}_1 \in \{0,0.1,0.2,\dots,1\}$ while keeping the total fine-tuning dataset size $K$ constant (ablations on $K$ are presented in Figure~J\ref{fig:mimic tv corr acros sizes}). Correspondingly, $\alpha^{(j)}_0 = 1 - \alpha^{(j)}_1$. This yields, for instance, a dataset with 0\% female images, 10\% female images, and so on until 100\%. 


\paragraph{Assessing sensitivity to subgroup allocation}
We quantify how subgroup allocation affects model performance on each subgroup $g^j$ in two ways. First, we fit a linear least-squares regression to the subgroup-specific performance (e.g., accuracy, loss, AUC) as a function of the allocation $\alpha^{(j)}_k$, recording the slope $a^{(j)}_k$. We then average this across both groups to obtain a slope estimate $a^j$. We also obtain a coarse estimate of generalisation by subtracting model performance on subgroup $g^{(j)}_k$ at 0\% allocation from its performance at 100\% allocation:
$\Delta \ell^{(j)}_k = \ell^{(j)}_k(\alpha^{(j)}_k = 1) - \ell^{(j)}_k(\alpha^{(j)}_k = 0).$

\paragraph{Datasets, tasks, and models}

We conduct these experiments in four image and text datasets with a range of model architectures for binary classification tasks. This includes even/odd digit prediction with a red and green coloured version of \textbf{MNIST} \citep{lecunGradientbasedLearningApplied1998}, pleural effusion classification in \textbf{MIMIC-CXR} (chest-X rays) \citep{johnsonMIMICCXRJPGLargePublicly2019}, skin lesion detection in \textbf{HAM10000} (skin images) \citep{ham10000dataset}, and toxic comment classification in \textbf{Civil\_comments} \citep{civilcommentsdataset}. These datasets all contain various metadata which enables natural splitting of the samples into subgroups, based on attributes like sex, ethnicity, image type, date of image etc. The multitude of {attributes} we compare within the same dataset allows us to gain more insights than previous studies which usually only consider one or two standard groupings \citep{rolfRepresentationMattersAssessing2021a, claucichFairnessDeepEnsembles2025,idrissiSimpleDataBalancing2022}. We use CNNs and transformers for our experiments. Detailed dataset characteristics and model implementation specifics are provided in Tables \ref{sec: appendix-supplementary-details}\ref{tab: implementation details} and \ref{tab:subgroup-attributes}.

\subsection{Sensitivity to subgroup allocation is highly variable}

Across our three real-world datasets, we find substantial variability in subgroup performance sensitivity to allocation.  Certain subgroups show no benefit from increased allocation and achieve equivalent performance whether the model is fine-tuned only on that subgroup or entirely without it (e.g., age in MIMIC). In contrast, other subgroups, like dataset of origin in HAM10000, are sensitive to their allocation. This results an estimated slopes of the accuracy change which range from 0 (no effect) to 0.05 (strong sensitivity), as shown in Figure~\ref{fig:subgroup alloc examples}. In other words, while the MIMIC model performs equivalently on old individuals whether or not it has been trained on such images, the HAM model is almost 10\% less accurate on one dataset source if it has not been trained on any images from it. We summarise all slopes obtained in Tables C\ref{table:fullftsubgroupslopes} and C\ref{table:llrtsubgroupslopes}. These results mirror other recent work which showed that while subgroup performance is sometimes maximised when a dataset is balanced with respect to a certain attribute, it can also be maximised at skewed allocations, or in other cases it can be equally maximised across allocations \cite{rolfRepresentationMattersAssessing2021a,claucichFairnessDeepEnsembles2025,rohFairBatchBatchSelection2021,cevorapopulationshift2024,wengsexbased2023}. 

\begin{figure}[h]
    \centering
    \includegraphics[width=1\linewidth]{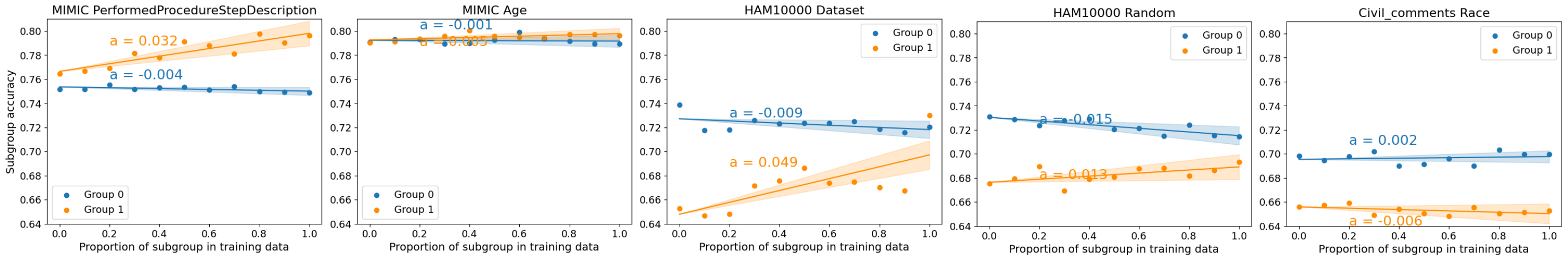}%
    \caption{\textbf{While some subgroups' accuracy increases with increased representation in training data, others' performance is independent of their training data representation}. {The fine-tuned model's balanced accuracy on each subgroup, averaged across 9 fine-tuning runs, is shown alongside estimated linear regression slopes $a$}.}
\label{fig:subgroup alloc examples}
\end{figure}

We consider more complex functional forms for fitting subgroup loss vs. allocation (e.g., power-law models \citep{rolfRepresentationMattersAssessing2021a}), but we find them unstable: fits vary substantially with small changes in data and standard deviations are large, an issue also reported in \citet{rolfRepresentationMattersAssessing2021a}). {We further discuss this in \S\ref{sec:complex-fits}}. We attribute this to small sample sizes and high heterogeneity across subgroups. In contrast, linear regression provides robust and interpretable summaries, and subgroup losses appear roughly linear across allocations. We therefore adopt linear fits as a first-order sensitivity measure.

\subsection{Current hypotheses do not explain differences\\in allocation sensitivity}
We explore common explanations for sensitivity to subgroup allocation including whether it could be linked to certain subgroups being under-represented in the initial pre-training dataset, certain subgroups being disadvantaged by the pre-trained model's performance, or certain subgroups having substantial class imbalances (Figures~\ref{fig:correlation subgroup prevalence accuracy slope} and D\ref{fig:correlation class prevalence slope} respectively). However, none of these three explanations appear to be consistently correlated with sensitivity to subgroup allocation. For instance, we see certain subgroups which are extremely under-represented in the pre-training set (e.g., less than 20\% of the pre-training data) which show no reduction in loss as fine-tuning data allocation increases (Figure~\ref{fig:correlation subgroup prevalence accuracy slope} top row). This suggests that over-representing under-represented subgroups does not necessarily yield performance improvements, in line with other recent work \citep{schrouffDiagnosingFailuresFairness2023,claucichFairnessDeepEnsembles2025,rohFairBatchBatchSelection2021} and contradicting many other pieces of research \citep{idrissiSimpleDataBalancing2022,wangOverwritingPretrainedBias2023,alabdulmohsinCLIPBiasHow2024}. 

Similarly,  we see surprising examples where certain subgroups which are initially amongst the lowest performing by the pre-trained model also show almost no decrease in loss as allocation increases (Figure~\ref{fig:correlation subgroup prevalence accuracy slope} bottom row), again contradicting the general assumption that training on more data from a poor-performing subgroup will improve model performance on that subgroup (or improve it more than training on general data) \citep{rohFairBatchBatchSelection2021,sagawaDistributionallyRobustNeural2020}.

{\begin{figure}[tb!]
    \centering
    \includegraphics[width=1\linewidth]{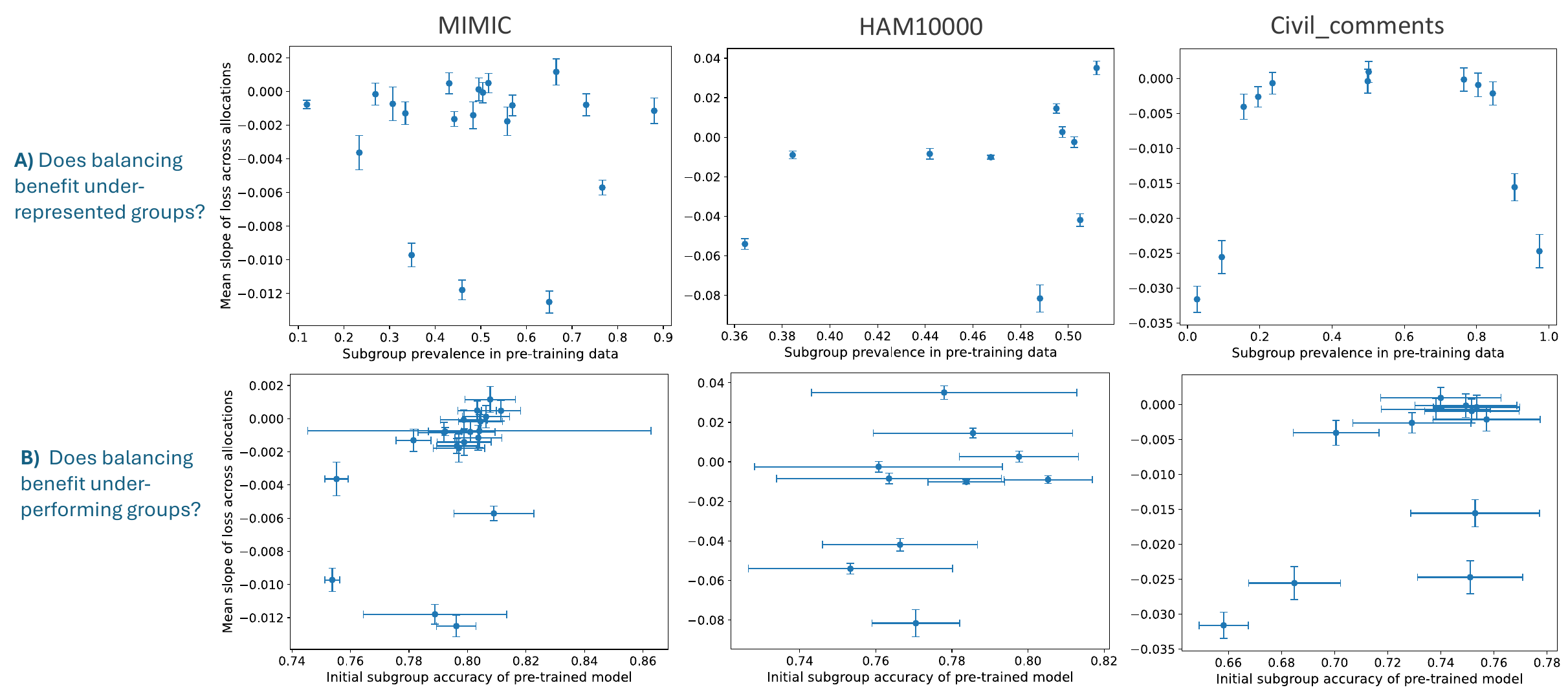}%
    \caption{\textbf{Performance on subgroups under-represented} during pre-training (top) and  performance on \textbf{disadvantaged subgroups} (bottom) does not necessarily improve with increasing dataset allocation. {The y-axis shows the gradient of subgroup loss change with respect to subgroup allocation (negative values indicate performance improvement). No clear correlation is observed in either setting. Each point represents one subgroup, with error bars showing variation across 9 runs.}}
    \label{fig:correlation subgroup prevalence accuracy slope}
\end{figure}}

\section{Bounding sensitivity to subgroup allocation with model latent representations}
\label{sec:theory}
Given the lack of a coherent explanation for differences in sensitivity to subgroup balance, we introduce the {following} hypothesis: sensitivity to subgroup allocation may be explained by whether a model learns distinct representations for subgroups $a_0$ and $a_1$, and thus needs to be trained on sufficiently high proportion of samples from, say $a_0$, in order to achieve good performance on $a_0$.

Through theoretical analysis and with certain assumptions, we are able to validate this hypothesis. We show that low latent separation with respect to an attribute $A$ implies low sensitivity to $A$ allocation in the fine-tuning dataset, i.e., fine-tuned model performance on $a_0$ and on $a_1$ does not significantly vary for different allocations $\alpha^{(A)}$. We formalise this idea by proving that small total variation distance (TV) between class-conditioned subgroup representations $Z$ in a pre-trained model, implies that last-layer fine-tuning on any dataset which differs in the allocation of $A$, but not in its proportion of $Y$, cannot result in models which differ significantly specifically in their subgroup accuracies (Theorem \ref{theorem 1}). To the best of our knowledge, this is the first result that links subgroup allocation sensitivity to class-conditional representation separation. For readability, we provide only proof sketches here, deferring the full derivations to Appendix \ref{appendix:full-theory}.

\begin{lemma}
\label{lemma:tv}
Let $f_{\theta, \eta}(x)=g_\theta(h_\eta(x))$ with representation $Z=h_\eta(X)$ and predictor $\hat Y=g_\theta(Z)$, where $g_\theta$ is the last layer. {$\mathbb{P}_\eta$ denotes the distribution over representations $Z$}.
Assume that
\[
\mathsf{TV}\big({\mathbb{P}_\eta} [Z\mid Y=y,A=1],\,{\mathbb{P}_\eta}[Z\mid Y=y,A=0]\big)\leq \varepsilon,
\]
for $y\in\{0, 1\}$. Then, it holds
$\bigl|{\mathbb{P}_\eta}(Z \mid Y=y,A=a)-{\mathbb{P}_\eta}(Z \mid Y=y)\bigr|
\le \varepsilon $ for all $a\in \{0,1\}$.
\end{lemma}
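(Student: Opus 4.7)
The strategy is to exploit the fact that, by the law of total probability (conditioned on $Y$), the marginal-over-$A$ representation distribution $\mathbb{P}_\eta[Z \mid Y=y]$ is a convex combination of the two class-conditional subgroup distributions $\mathbb{P}_\eta[Z \mid Y=y, A=0]$ and $\mathbb{P}_\eta[Z \mid Y=y, A=1]$. Any deviation between the marginal and one of these conditionals can therefore be written \emph{exactly} as a scaled copy of the gap between the two conditionals, which is the quantity bounded by $\varepsilon$ in the hypothesis.

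Concretely, the first step is to write, for each fixed $y \in \{0,1\}$,
\begin{equation*}
\mathbb{P}_\eta[Z \mid Y=y] \;=\; \pi_0\, \mathbb{P}_\eta[Z \mid Y=y, A=0] \;+\; \pi_1\, \mathbb{P}_\eta[Z \mid Y=y, A=1],
\end{equation*}
where $\pi_a = \mathbb{P}[A=a \mid Y=y]$ and $\pi_0+\pi_1=1$. Subtracting the $a=0$ term from both sides and rearranging gives
\begin{equation*}
\mathbb{P}_\eta[Z \mid Y=y] - \mathbb{P}_\eta[Z \mid Y=y, A=0] \;=\; \pi_1 \bigl(\mathbb{P}_\eta[Z \mid Y=y, A=1] - \mathbb{P}_\eta[Z \mid Y=y, A=0]\bigr),
\end{equation*}
and symmetrically, with factor $\pi_0$, for $a=1$. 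The second step is to apply the total variation norm to both sides. Since TV is a norm on signed measures (in particular absolutely homogeneous), the factor $\pi_{1-a} \in [0,1]$ pulls out, yielding
\begin{equation*}
\mathsf{TV}\bigl(\mathbb{P}_\eta[Z \mid Y=y],\, \mathbb{P}_\eta[Z \mid Y=y, A=a]\bigr) \;=\; \pi_{1-a}\cdot \mathsf{TV}\bigl(\mathbb{P}_\eta[Z \mid Y=y, A=1],\, \mathbb{P}_\eta[Z \mid Y=y, A=0]\bigr) \;\le\; \pi_{1-a}\,\varepsilon \;\le\; \varepsilon,
\end{equation*}
which is the desired bound, interpreting the $|\cdot|$ in the statement as the TV distance (consistent with the hypothesis) or, equivalently, as a supremum over measurable events.

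There is no substantive obstacle here: the proof is essentially a one-line consequence of the law of total probability and homogeneity of the TV norm. The only mild subtlety is notational, namely reconciling the $|\cdot|$ in the conclusion with the $\mathsf{TV}$ in the assumption; once these are read in the same sense, the bound $\pi_{1-a} \le 1$ collapses both subcases ($a=0$ and $a=1$) into the single inequality stated. This lemma is useful as a stepping stone: it converts pairwise class-conditional separation into a uniform bound between each subgroup conditional and the marginal, which is exactly the form needed to control how last-layer fine-tuning responds to changing the allocation $\alpha^{(A)}$ in Theorem~\ref{theorem 1}.
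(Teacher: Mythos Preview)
Your proposal is correct and follows essentially the same approach as the paper: decompose $\mathbb{P}_\eta[Z\mid Y=y]$ as a convex combination of the two subgroup conditionals via the law of total probability, subtract one conditional to expose the factor $\pi_{1-a}$, and use homogeneity of the TV norm together with $\pi_{1-a}\le 1$. The paper's proof is identical in structure (it treats $a=1$ explicitly and declares $a=0$ analogous, whereas you handle both at once with the $\pi_{1-a}$ notation), and it resolves the $|\cdot|$-versus-$\mathsf{TV}$ notation in exactly the way you anticipate, namely by reading the conclusion as a supremum over measurable events.
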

This theorem tells us that if the representation \( Z \) is similar between groups (i.e., \( A = 0 \) and \( A = 1 \)) for a given label \( Y = y \), then each group's representation is also close to the overall representation for that label, meaning group membership does not significantly affect the representation once the label is fixed. We can use this lemma to prove the main result. The proof is shown in \ref{appnedix:lemma:tv}.
\paragraph{Assumptions.}
We assume that
(i) fine-tuning datasets $\mathtt D'$, $\mathtt D''$ differ only in subgroup allocations $\alpha^{(A)}$, and 
(ii) the marginal label distribution $P(Y)$ and conditional distribution $P(Y\mid A)$ remain unchanged across datasets. {These assumptions are further discussed in Appendix~\ref{remark:assumptions}.}

\begin{theorem}[Group accuracy parity]
\label{theorem 1}
Let $f_{\theta, \eta}(x)=g_\theta(h_\eta(x))$ with representation $Z=h_\eta(X)$ and predictor $\hat Y=g_\theta(Z)$, where $g_\theta$ is the last layer. For {a dataset $\mathtt D$}, define the quantity
\[
{\mathsf{TV}(\mathtt D)} := \mathbb{E}_{y}\left [ \mathsf{TV}\big({\mathbb{P}_{\eta}} [Z\mid Y=y,A=1],\,{\mathbb{P}_{\eta}}[Z\mid Y=y,A=0]\big) \right ].
\]
Suppose that the model is fine-tuned on two datasets $\mathtt D'$, $\mathtt D''$ which differ only in $\alpha^{(A)}$, yielding two models with parameters $\theta'$ and $\theta''$.
If {$\mathsf{TV}(\mathtt D') \leq \varepsilon$ and $ \mathsf{TV}(\mathtt D'') \leq \varepsilon$}, then  
$$
\absl{{\textsc{Acc}}_{\theta'}(A=a) - {\textsc{Acc}}_{\theta''}(A=a) } \leq 4\varepsilon + \absl{{\textsc{Acc}}_{\theta'} - {\textsc{Acc}}_{\theta''} }
$$
for all $a\in \{0,1\}$.
\end{theorem}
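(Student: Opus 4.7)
The plan is to bound, for each fine-tuned model separately, the gap between the group-conditional and overall accuracy by $2\varepsilon$, and then chain the two bounds through the triangle inequality. Writing
\[
|\textsc{Acc}_{\theta'}(A=a) - \textsc{Acc}_{\theta''}(A=a)| \le |\textsc{Acc}_{\theta'}(A=a) - \textsc{Acc}_{\theta'}| + |\textsc{Acc}_{\theta'} - \textsc{Acc}_{\theta''}| + |\textsc{Acc}_{\theta''} - \textsc{Acc}_{\theta''}(A=a)|,
\]
it suffices to prove $|\textsc{Acc}_\theta(A=a) - \textsc{Acc}_\theta| \le 2\varepsilon$ for both $\theta \in \{\theta', \theta''\}$; the middle term supplies the $|\textsc{Acc}_{\theta'}-\textsc{Acc}_{\theta''}|$ remainder and the two outer terms together contribute the $4\varepsilon$.

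For the per-model bound I first decompose by conditioning on the label, $\textsc{Acc}_\theta(A=a) = \sum_y P(Y=y\mid A=a)\,\mathbb{P}[g_\theta(Z)=y\mid Y=y, A=a]$. The inner probability is the integral of a $[0,1]$-valued indicator against $\mathbb{P}_\eta[Z\mid Y=y, A=a]$, so the standard fact that TV upper-bounds the difference of expectations of bounded functions, together with Lemma~\ref{lemma:tv}, yields $|\mathbb{P}[g_\theta(Z)=y \mid Y=y, A=a] - \mathbb{P}[g_\theta(Z)=y\mid Y=y]| \le \mathsf{TV}(\mathbb{P}_\eta[Z\mid Y=y,A=a],\mathbb{P}_\eta[Z\mid Y=y])$. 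Averaging over $y$ and invoking $\mathsf{TV}(\mathtt{D}) \le \varepsilon$ then gives $|\textsc{Acc}_\theta(A=a) - \sum_y P(Y=y\mid A=a)\textsc{Acc}_\theta(Y=y)| \le \varepsilon$. By assumption (ii) the weights $P(Y\mid A)$ are dataset-invariant, so the residual gap to $\textsc{Acc}_\theta = \sum_y P(Y=y)\textsc{Acc}_\theta(Y=y)$ depends only on the fixed marginal imbalance; a second, symmetric application of Lemma~\ref{lemma:tv} comparing the two subgroup representations absorbs this at the cost of a further $\varepsilon$, yielding the desired $2\varepsilon$.

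The main obstacle I anticipate is precisely this bridge between the $P(Y\mid A=a)$-weighted and $P(Y)$-weighted $Y$-conditional accuracies, as the reweighting gap is not directly controlled by the representation TV. The resolution leans on the dataset-invariance of $P(Y\mid A)$ in assumption (ii): the offending reweighting term is the same function of the two models' per-label accuracies, so it cancels on differencing and only contributes through the second Lemma~\ref{lemma:tv} application. Tracking constants through the expectation-over-$y$ form of $\mathsf{TV}(\mathtt{D})$ (as opposed to the pointwise form used in the lemma's statement), and ensuring the accuracies computed on the evaluation distribution remain coherent with the representation-level TV assumption, is where most of the technical care lies; the remainder is the two triangle inequalities above.
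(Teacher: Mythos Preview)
Your decomposition splits the models too early, and the resulting per-model bound $|\textsc{Acc}_\theta(A=a)-\textsc{Acc}_\theta|\le 2\varepsilon$ cannot be established under the stated assumptions. Writing
\[
\textsc{Acc}_\theta(A=a)-\textsc{Acc}_\theta
=\sum_y P(Y=y\mid A=a)\bigl[\mathbb{P}(\hat Y=y\mid Y=y,A=a)-\mathbb{P}(\hat Y=y\mid Y=y)\bigr]
+\sum_y\bigl[P(Y=y\mid A=a)-P(Y=y)\bigr]\,\textsc{Acc}_\theta(Y=y),
\]
only the first sum is controlled by Lemma~\ref{lemma:tv}. The second is a pure label-prior shift and has nothing to do with the distribution of $Z$; assumption~(ii) says $P(Y)$ and $P(Y\mid A)$ are \emph{invariant across datasets}, not that $P(Y\mid A=a)=P(Y)$. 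Your proposed ``second, symmetric application of Lemma~\ref{lemma:tv}'' cannot absorb this term, and the ``cancels on differencing'' remark does not apply to your own decomposition: the two outer terms $|\textsc{Acc}_{\theta'}(A=a)-\textsc{Acc}_{\theta'}|$ and $|\textsc{Acc}_{\theta''}(A=a)-\textsc{Acc}_{\theta''}|$ are bounded separately in absolute value, so the prior-shift contributions (which carry different per-label accuracies $\textsc{Acc}_{\theta'}(Y=y)$ vs.\ $\textsc{Acc}_{\theta''}(Y=y)$) are added, not subtracted.

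The paper avoids this issue by never passing through the overall accuracy per model. It first drops the weights $p_{a,y}=P(Y=y\mid A=a)$ via $p_{a,y}\le 1$ to reach $\sum_y|\mathbb{P}_{\theta'}(\hat Y=y\mid Y=y,A=a)-\mathbb{P}_{\theta''}(\hat Y=y\mid Y=y,A=a)|$, and only then inserts the intermediate points $\mathbb{P}_{\theta'}(\hat Y=y\mid Y=y)$ and $\mathbb{P}_{\theta''}(\hat Y=y\mid Y=y)$ via the triangle inequality. The two outer pieces give $2\varepsilon+2\varepsilon$ by Lemma~\ref{lemma:tv}, and the middle piece $|\sum_y \mathbb{P}_{\theta'}(\hat Y=y\mid Y=y)-\mathbb{P}_{\theta''}(\hat Y=y\mid Y=y)|$ is identified with $|\textsc{Acc}_{\theta'}-\textsc{Acc}_{\theta''}|$ using the balanced/realizability assumption. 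The point is that the triangle inequality is taken at the level of the \emph{per-label} quantities, where the weighting is the same for both models, so no prior-shift residue appears.
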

This theorem tells us that if two models are fine-tuned on datasets that differ only in the group proportions (i.e., the distribution of \( A \)), and both models learn approximately group-invariant representations (i.e., {\( \mathsf{TV}(\mathtt D) \leq \varepsilon \))}, then the accuracy on any group \( A = a \) will not differ much between the models. 

The proof (\ref{appendix:fullthoerem}) uses a result that bounds how much the model's class-conditioned predictions can differ across groups using total variation distance (Lemma~\ref{lemma:tv}). Assuming that the label distribution \( \mathbb{P}(Y) \) and the conditional distribution \( \mathbb{P}(Y \mid A) \) remain unchanged across the datasets, this implies that the subgroup accuracy difference is bounded by a term depending on \( \varepsilon \) and the difference in overall accuracies between the models.
{\paragraph{Remark (Scope of the last-layer assumption).}
The statement and proof of Theorem~\ref{theorem 1} do not rely on retraining only the last layer. For any fixed representation map $h\!: \mathcal{X}\!\to\!\mathcal{Z}$ (including $h=\mathrm{Id}$ so that $Z=X$), if the class-conditional subgroup distributions of $Z$ have $\mathrm{TV}$ bounded by $\varepsilon$ then the same accuracy bound holds for any readout $g_\theta$ trained on $Z$. In practice, however, evaluating TV in the input or very early-layer spaces often yields large values and a vacuous bound.}

\paragraph{Remark {(Tightness of bound)}.} In practice, for a fixed fine-tuning budget $K$, the overall accuracy difference is typically negligible relative to subgroup accuracy differences, i.e.\ $|\Delta {\textsc{Acc}}| \ll |\Delta {\textsc{Acc}}_a|$. We observe this in our experiments {(Figure~E\ref{fig:overall acc sd plots})}, implying that TV differences are the dominant driver in the upper bound.

\section{Subgroup separation predicts sensitivity to allocation in real-world experiments}
\label{sec:experiments}
We showed that if subgroup representations are nearly indistinguishable (as measured by TV), then modifying fine-tuning dataset subgroup allocation (assuming that $P(Y)$ and $P(Y|A)$ are unchanged) has little effect on downstream accuracies. We now turn to empirical analyses to test how well this theoretical upper bound captures real-world behaviour, and to investigate whether finer-grained patterns, such as correlations between representation separation and allocation sensitivity, emerge beyond what the bound alone reveals.

\subsection{Assessment of representation invariance}

We keep the same setup as previously, where models are pre-trained on a random subset of each of the four datasets, and their last layer is then fine-tuned with varying subgroup allocations. To cover a broad range of subgroups, we relax the theorem’s assumption that $P(Y)$ is fixed across fine-tuning distributions. Notably, many attributes do satisfy (or closely approximate) this assumption as they have equal class prevalences. This includes gender, marital status, language, and race in MIMIC; localisation in HAM10000; year and race in Civil\_comments; and random groups across all datasets {(full details listed in Table~B\ref{tab:subgroup-attributes})}.

We quantify subgroup separation by extracting penultimate-layer embeddings, projecting them to a lower-dimensional space using PCA (retaining $\geq 70\%$ variance), and computing the mean \textbf{total variation} distance (TV) between subgroup distributions conditioned on $Y$. TV is bounded in $[0,1]$, with higher values indicating stronger separation. For completeness, we also explore additional distance metrics including the \textbf{Wasserstein distance} (WD) and the \textbf{Fréchet distance} (FD) which emphasise distinct representation differences. We give additional methodological details, show that our metrics are robust to whether they are calculated on the full feature-space or the reduced feature space, and show that all three distances metrics are correlated in Appendix Section \ref{sec: appendix-representation-invariance}.

\subsection{Intuition for our results in MNIST}

In our synthetic MNIST set-up, we \textit{know} that a good parity classification model should not rely on colour ($A^{(0)}$) to make a prediction, so learnt representations $\vz $ should be invariant to $A^{(0)}$, in other words $P(\vz \mid Y, a^{(0)}_0) = P(\vz \mid Y, a^{(0)}_1)$. In contrast, a model must encode some notion of which digit is represented in order to classify it into even vs. odd, and therefore its learned representation \textit{should} depend on whether the digit is over 5 or under 5 ($A^{(1)}$). We test this by training a 2-layer CNN. As expected, the penultimate layer model embeddings do cluster by digit $A^{(1)}$ but not by colour $A^{(0)}$ (Figure~\ref{fig:mnist_reps_allocs}). Quantitatively, the average TV between the images representing digits over 5 and those under 5 is 0.17, approximately twice the average distance between the red and green images (which is itself close to that between random groupings), as shown in Figure~F\ref{fig:TV differences all datasets}. 

Our hypothesis predicts that fine-tuning this model on datasets with different proportions of the same subgroups will show that the model is only sensitive to the allocation of the under 5/over 5 groups but not to the red/green groups. Indeed, we find that the subgroup accuracy on under 5 and over 5 images drops sharply as their fine-tuning dataset allocation drops, while the subgroup accuracy in the red/green images is roughly independent of the fine-tuning dataset allocation (Figure~\ref{fig:mnist_reps_allocs}, bottom row). The model can effectively generalise ``zero-shot'' to new colours, but not to unseen digit groups. Looking back at common explanations in the literature, we note that both red/green and over 5/under 5 groups have equal $P(Y)$, equal base accuracy, and equal base allocation in the pre-training dataset, so none of those explanations would have been able to predict the observed discrepancy in subgroup allocation sensitivity.

\subsection{Subgroup separation is highly correlated to allocation sensitivity}

We next explore how our hypothesis holds in real-world datasets with more subgroups and larger models, where there is no clear-cut separation between attributes a model should be invariant to or not. We first note that there is wide variation in subgroup separation in penultimate-layer representations (as shown in Figure~\ref{fig:TV differences all datasets}). Generally, image-related attributes (e.g., X-ray view in MIMIC or dataset of origin in HAM10000) induce greater separation than demographic attributes. Some subgroups, such as Civil\_comments year (pre- vs. post-2017) or HAM10000 lesion location (extremity vs. trunk), show separations comparable to random splits. The three distance metrics (TV, WD, FD) are strongly correlated and robust across full vs. PCA-reduced spaces (Figures~F\ref{fig:correlation distance metrics} and \ref{fig:mmd fd top pcs no dr}).

Across our three real-world datasets (image and text) and architectures (CNNs and transformers), we find a significant correlation between subgroup separation in pre-trained representations and subgroup sensitivity to allocation during fine-tuning (Figure~\ref{fig:subgroup alloc representations 3 datasets}). This correlation holds across the three distance metrics. Intuitively, when two subgroups have similar representation of target Y (i.e., $P(\vz \mid Y, a^{(0)}_0) = P(\vz \mid Y, a^{(0)}_1)$), then additional subgroup specific data provides little added value. Conversely, when subgroup representations are separated, allocation has a large effect. 

\begin{figure}[h]
    \centering
    \includegraphics[width=1\linewidth]{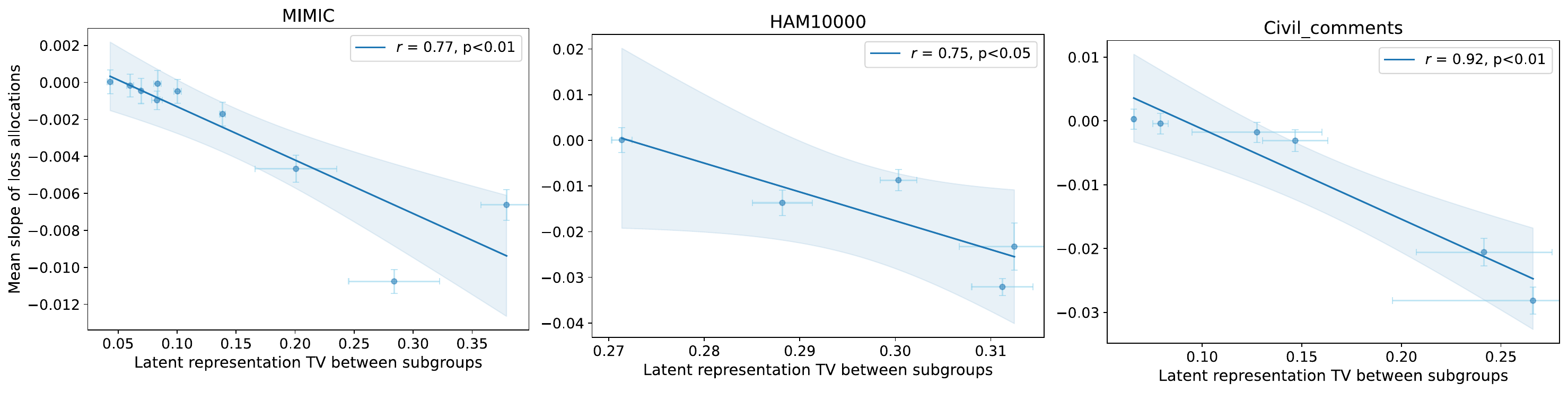}
    \caption{\textbf{Sensitivity to subgroup allocation is highly correlated with separation} in the pre-trained model's representation space (as measured by total variation distance, TV) across the three datasets. Each dot represents mean TV and loss slope for one subgroup, averaged across 9 fine-tuning runs, with bars corresponding to standard deviations, and Pearson correlation also shown.}
    \label{fig:subgroup alloc representations 3 datasets}
\end{figure}

We also observe that zero-shot generalisation is directly related to subgroup separation. As shown in Figure~\ref{fig:zero shot generalisation represenations 3 datasets}, subgroup AUC is constant between 100\% and 0\% training data allocation only when representations are (approximately) invariant across them. This echoes work in the domain adaptation literature, which stipulate that invariant representations are more robust as they generalise across environments \citep{shaiNIPS2006_b1b0432c,shaitheoryoflearning,arjovsky2020invariantriskminimization}. {Similar ideas have also laid the foundation for ``fair representation learning'' methods which aim to mitigate biases by encouraging models to learn causal representations of $Y$ rather than encoding, e.g., demographic attributes like sex \citep{sarhanFairnessLearningOrthogonal2020,madrasLearningAdversariallyFair2018}.} Here, we extend this principle to subgroup balancing by using a measurable
property of the \emph{pre-trained} model, class-conditional latent separation, as a
predictive diagnostic for when allocation will matter. We further provide a bound (Theorem~\ref{theorem 1})
linking small total-variation separation to small subgroup-accuracy differences across
allocations. Altogether, these empirical results support our theoretical finding of an upper bound to performance differences across allocations, and the consistent correlation suggests there may even be a stronger phenomenon at play.

\begin{figure}[h]
    \centering
    \includegraphics[width=1\linewidth]{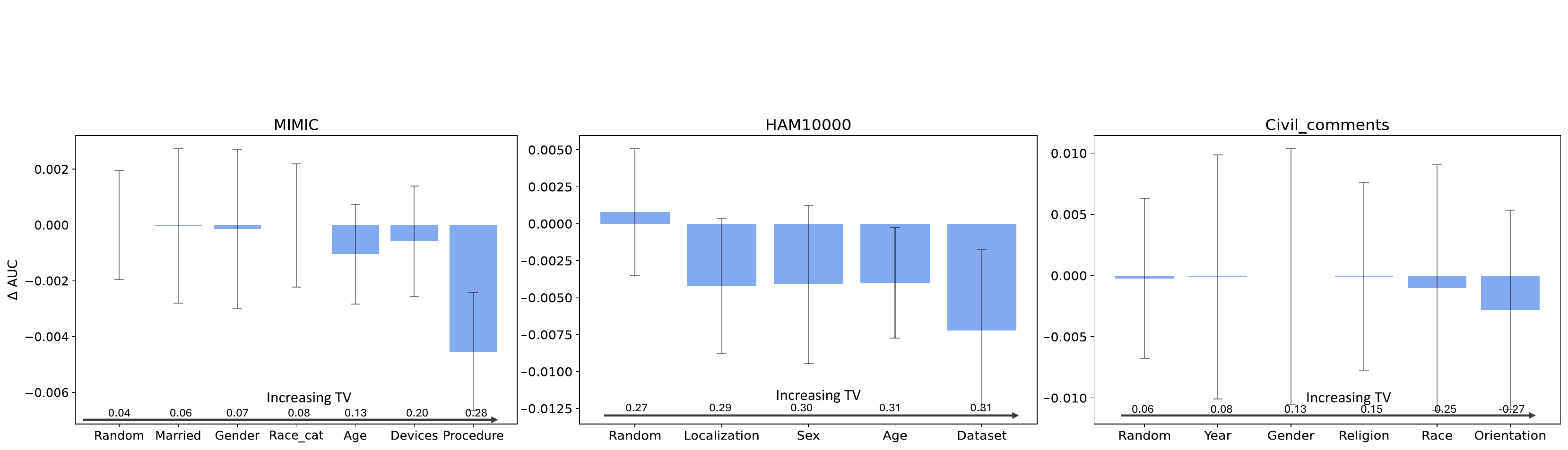}%
    \caption{\textbf{AUC difference when a model has not been trained on a subgroup increases with the separation of the latent representations of the subgroups.} Results are shown as mean AUC difference across 9 fine-tuning runs with error bars indicating standard deviation.}
    \label{fig:zero shot generalisation represenations 3 datasets}
\end{figure}

While we first experiment with last-layer fine-tuning for consistency with our theory, we test whether this correlation extends to settings where the fine-tuning occurs on the full network (and therefore representations $Z$ could change). We find equally significant correlations, with effects of stronger magnitudes, across all three datasets (Figure H\ref{fig:correlation slope distances full fine-tuning} and H\ref{fig: generalisation and reps full fine-tuning}), suggesting that this trend may hold in less restricted settings. Surprisingly, analysis of the separation of last-layer representations shows that they are roughly constant across allocations, effectively matching our initial setup ({Figure H\ref{fig:reps at varying allocs}}). {We also test whether our results hold when training from scratch (using the model trained on the natural dataset proportions to analyse latent representation separation), and find that the same trends persist, with allocation sensitivities almost 10 times stronger (full results in Appendix \ref{sec:training-from-scratch}).}

\subsection{{Operationalisation of latent separation hypothesis}}
{We further test our hypothesis by explicitly \textit{enforcing} low TV (via a differentiable proxy) during pre-training to see whether this affects sensitivity to subgroup allocation (method detailed in \S~\ref{sec: appendix-operationalisation}). We find that regularisation indeed reduces latent TV separation, which leads to a significant drop in overall performance (over 0.10 accuracy decrease) but reduced performance gaps between frontal and lateral images, and importantly, reduced sensitivity to subgroup allocation (accuracy slope decreases from 0.016 to -0.007 with regularisation), as shown in Figure~\ref{fig:operationalisation}. This intervention directly supports our hypothesis that latent separation \textit{drives} allocation sensitivity.}


\begin{figure}[h]
    \centering
    \includegraphics[width=1\linewidth]{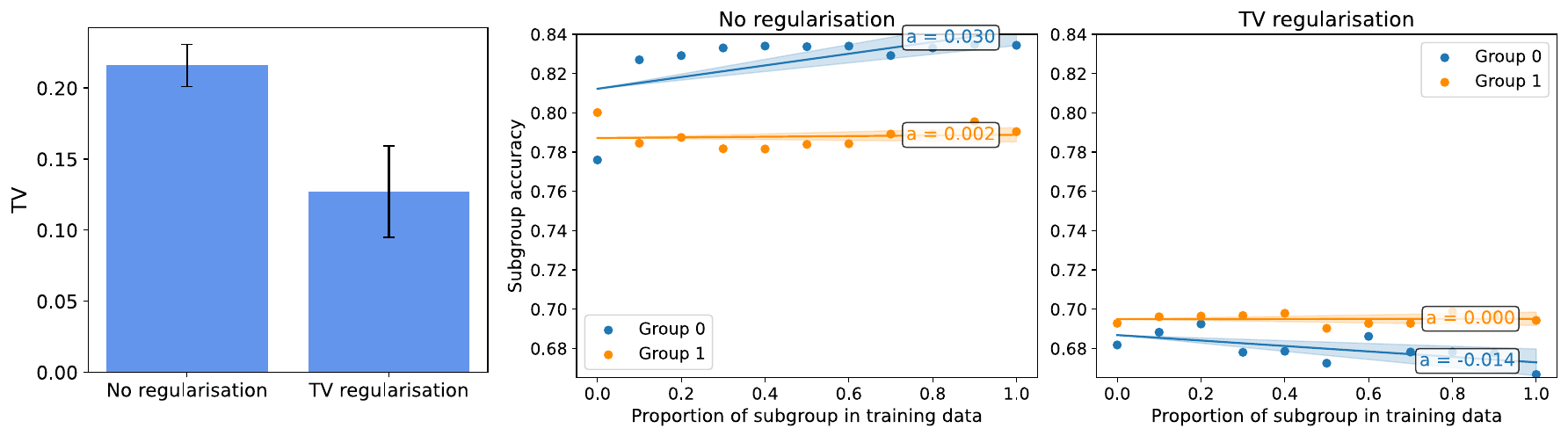}%
    \caption{{\textbf{TV regularisation reduces subgroup allocation sensitivity.} Mean latent TV distance between view-groups after pre-training (left), fine-tuned subgroup balanced accuracy as subgroup allocation increases without TV regularisation (middle) and with regularisation (right).}}
    \label{fig:operationalisation}
\end{figure}

\subsection{Practical application in fine-tuning a foundation model}
\label{sec: vlm fine-tuning}

Beyond our findings' analytical value in explaining previously observed discrepancies in subgroup allocation sensitivity, we also examine its practical utility. Specifically, we consider a more realistic setup where a foundation model (FM) is used to generate informative embeddings of a dataset, on top of which a simple classification layer is trained. For this, we use two radiology-specific FMs: CheXagent \citep{chen2024visionlanguagefoundationmodelenhance} {and RAD-DINO-MAIRA-2 \citep{perezgarcia2024raddino}. Both are trained on over 1 million images, with distinct training mechanisms (e.g., image and text via SigLIP and only images via DINO respectively).} We use these two models to embed images in the MIMIC-CXR test set. This differs from our traditional setup as the pre-training and fine-tuning datasets are drawn from different distributions and the fine-tuning task does not directly overlap with the pre-training tasks.

\begin{wrapfigure}{r}{0.5\textwidth}
    \centering
    \includegraphics[width=\linewidth]{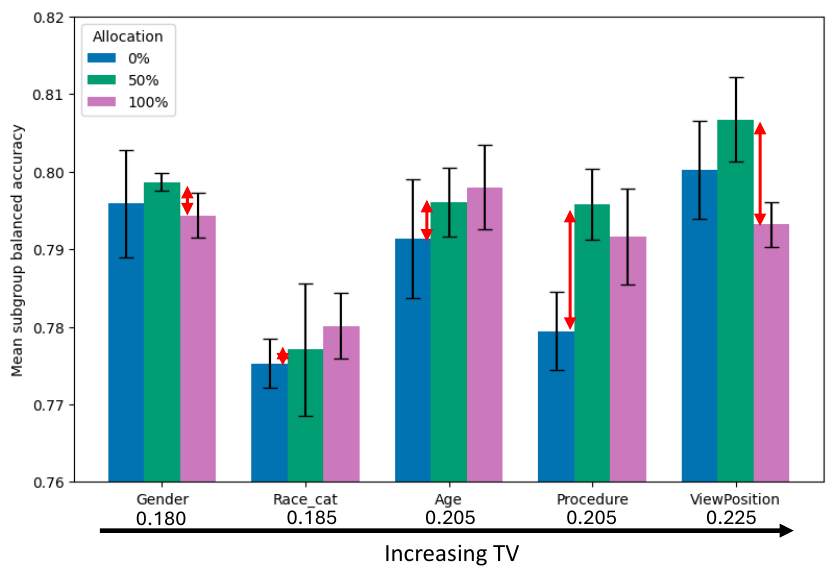}%
    \caption{\textbf{In foundation model fine-tuning}, selecting a balanced allocation for imaging subgroups increases subgroup accuracy by over 0.02, but has less importance for demographic groups, as predicted by their reduced total variation distance. We show results of 3 fine-tuning runs, with red arrows for potential balancing gains.}
    \label{fig vlm subgroup gains}
\end{wrapfigure}

We again measure distances between subgroup representations and find substantial variability (Figures I\ref{fig:TV differences vlm}). Interestingly, demographic attributes such as gender and age are more separated in the FM than in our task-specific model, likely because a general-purpose model encodes broad features rather than only task-specific ones. {Across both models,} the most separated groups remain imaging-related, followed by age, gender, and ethnicity. Based on our hypothesis, we predict that sensitivity to subgroup allocation should follow this ordering, and that if we are concerned about maximising subgroup performance across each of these attributes, we should prioritise balancing the dataset with respect to imaging variables. Indeed, our experiments confirm this: balancing by gender or race has little effect, but balancing by X-ray procedure or view significantly increases mean subgroup accuracy when using a 50/50 allocation compared to 100/0 in both models (Figure~\ref{fig vlm subgroup gains}{, I\ref{fig: maira-2 acc gains}}). Full analysis of allocation sensitivity and latent separation shows strong correlations ($r \in [0.60,\,0.95]$, Figure~I\ref{fig:vlm distances slope}, \ref{fig:maira-2 distances slope}), supporting our hypothesis under much weaker assumptions and highlighting its potential practical implications for data curation and model fine-tuning. {Concretely, practitioners could use this  TV calculation to prioritise obtaining balanced data with respect to procedure and image view in further fine-tuning data collection efforts.}

\subsection{Limitations and future work}

{The core of our} study is restricted to a pre-training–then–fine-tuning setup, as this is necessary to first explore representation separation before deciding how to allocate data. One consequence is that the effects we observe are often modest in magnitude, but they remain consistent and statistically significant across the different random seeds, data splits, and pre-trained checkpoints we use. We also do not provide a method for determining the optimal allocation strategy to maximise accuracy across multiple subgroups. We caution that precise recommendations (e.g., targeting 40/60 female/male or 20/80 young/old ratios) would likely be unreliable due to factors {like confounders, noise, or differences in sample informativeness}, and in any case difficult to operationalise. Instead, we argue that a broader understanding of \textit{whether} and \textit{how strongly} subgroup allocation influences performance is more relevant to current concerns in fairness and domain generalisation. Moreover, future work should explore how our findings apply to more complex settings, including tasks other than binary classification, with multi-valued or even continuous subgroups, and {settings where the training data is expanded instead of simply re-allocated}. {Future work should also explore if our findings can be leveraged for bias mitigation (for instance by enforcing representation invariance where appropriate).} Finally, our work is limited, as in all fairness research, by our use of subgroups. For example, finding that a model is insensitive to the allocation of white vs. non-white samples does not imply that allocation across ethnic groups as a broad concept is irrelevant, simply that, when described by our chosen coarse groupings, it does not matter.
\label{results_and_discussion}

\section{Conclusions}
\label{conclusions}
We propose, prove (under certain assumptions), and give extensive empirical evidence for, a novel hypothesis explaining why in some cases subgroup training data allocation does not matter for subgroup performance, and in other cases, it is crucial. Unlike standard explanations (e.g., assuming that under-represented or poorly performing-groups always benefit from increased data representation), our hypothesis consistently matches empirical results across diverse datasets and models. By predicting subgroup allocation sensitivity through latent representation analysis, we provide a new way to inform crucial training data decisions, maximising fairness, accuracy, and efficiency.

\section*{Acknowledgements} 

A.A. is supported by the EPSRC grant number EP/S024093/1, the Centre for Doctoral Training SABS: R3, University of Oxford, and by GE Healthcare. C.J. is supported by Microsoft Research, EPSRC, and The Alan Turing Institute through a Microsoft PhD scholarship and a Turing PhD enrichment award.
The computational aspects of this research were supported by the Wellcome Trust Core Award Grant Number 203141/Z/16/Z and the NIHR Oxford BRC. The views expressed are those of the author(s) and not necessarily those of the NHS, the NIHR or the Department of Health.

\bibliographystyle{apalike}
\bibliography{bibliography}
\newpage
\onecolumn
\renewcommand{\thesection}{\Alph{section}}
\setcounter{section}{0}
\noindent {\LARGE\textbf{Appendix}}
\section{Appendix structure}
\begin{itemize}
    \item \S\ \ref{sec: appendix-supplementary-details}: Supplementary details on datasets, models, and their implementations
    \item \S\ \ref{sec: appendix-subgroup-sensitivity}: Supplementary results on sensitivity to subgroup allocation and ways to model it
    \item \S\ \ref{sec: appendix-common-hypotheses}: Supplementary results on common hypotheses not holding
    \item \S\ \ref{appendix:full-theory}: Full theoretical results {and further remarks}
    \item \S\ \ref{sec: appendix-representation-invariance}: Supplementary results on latent representation distances
    \item \S\ \ref{sec: appendix-correlation-distance-sensitivity}: Supplementary results on the correlation between representation distance and subgroup allocation sensitivity
    \item \S\ \ref{sec: appendix-training-regimes}: Extending setup to {other training regimes} (full fine-tuning {and training from scratch})
    \item \S\ \ref{sec: appendix-vlm-finetuning}: Supplementary results on foundation model fine-tuning
    \item \S\ {\ref{sec: appendix-operationalisation}: Operationalisation of hypothesis through TV regularisation}
    \item \S\ \ref{sec: appendix-finetuning-budget}: Ablations on fine-tuning budget $K$ in MIMIC
\end{itemize}

\section{Supplementary experimental details}
\label{sec: appendix-supplementary-details}

We present additional details on the four datasets and models used. We do not use the full datasets for fine-tuning because to keep all fine-tuning allocation datasets the same size we set them to the size of the smallest subgroup in each dataset. For each dataset, we pre-train three models with different random seeds. For each subgroup allocation experiment, we fine-tune the final classification layer of each pre-trained model on the three randomly generated data splits, resulting in a total of nine fine-tuned models per subgroup allocation. This is then repeated across all the subgroups we consider. This reflects one of the strengths of our work, that compared to the typical data balancing paper, we compare many different subgroups within a dataset (5 to 11), allowing us to gain new insights on why they may have different properties. 

Table~\ref{tab: implementation details} summarises the implementation choices for each dataset. We conduct hyperparameter tuning both for pre-training and fine-tuning. We optimise hyperparameters within the following ranges:

\begin{itemize}
    \item \textbf{Batch size}: {64, 128, 256}
    \item \textbf{Learning rate}: [1e-6:1e-3]
    \item \textbf{Weight decay}: [1e-6:1e-2] 
\end{itemize}

\begin{table}[htbp]
\caption{Implementation details for all models.}
\label{tab: implementation details}
\begin{small}
\begin{center}
\resizebox{\columnwidth}{!}{%
\begin{tabular}{@{}lllll@{}}
\toprule
\textbf{Training strategy}  & \textbf{MNIST}           & \textbf{MIMIC-CXR} & \textbf{HAM10000} & \textbf{Civil\textunderscore comments}                \\ \midrule
\textbf{$Y$}         & Even/odd digit                                   & Pleural effusion  & Malignant/benign lesion & Comment toxicity                                    \\ \midrule
\textbf{Backbone}         & 2-layer CNN                                     & DenseNet121 \cite{densenet} & ViTB16 \cite{dosovitskiy2021imageworth16x16words} & BERTClassifier (uncased) \cite{bertclassifier}                                    \\ \midrule
\textbf{Initialisation}         & None                                    & ImageNet \cite{dengimagenet2009}  & ImageNet \cite{dengimagenet2009} & Bookcorpus, Wikipedia (English)                                     \\ \midrule
$\mathbf{N_{pretrain}}$        & 5,000                                   & 33,237  & 2,000 & 17,920                                     \\ \midrule
$\mathbf{N_{finetune}}$        & 24,000                                   & 22,353  & 3,184 & 24,000                                     \\ \midrule
$\mathbf{N_{test}}$          & 4,000                                   & 26,590  & 1,000 & 12,243                                    \\ \midrule
\textbf{Image size}         & 3x28x28                                 & 3x256x256 & 3x256x256 & NA                                 \\ \midrule
\textbf{Augmentation}       & Flip, rotation, Gaussian blur    & Flip, rotation, affine transformation, crop & Flip, rotation, color jitter, affine transformation, crop & None \\ \midrule
\textbf{Optimiser}          & Adam                                    & Adam  & AdamW & AdamW                                     \\ \midrule
\textbf{Loss}               & Binary cross-entropy                    & Binary cross-entropy  & Binary cross-entropy & Binary cross-entropy                       \\ \bottomrule
\end{tabular}%
}
\end{center}
\end{small}
\end{table}

We create natural subgroups based on the metadata available, such as based on sex, age, ethnicity, or image-related attributes like the position of the X-ray.
subgroups are generated based on whether the comment mentions specific attributes, like gender, religion, or sexual orientation. 

\begin{table}[htb]
\begin{small}
\centering
\caption{Subgroups used for each dataset and their base population prevalence and class prevalence. {Subgroups which satisfy our main theorem's assumptions are highlighted in green.}}
\label{tab:subgroup-attributes}
\setlength{\tabcolsep}{6pt}
\renewcommand{\arraystretch}{1.15}
\resizebox{\linewidth}{!}{
\begin{tabular}{@{}lllcc@{}}
\toprule
\textbf{Dataset} & \textbf{Attribute} & \textbf{Group 0 / 1} & {$\mathbf{P(A{=}1)}$} & {$\mathbf{P(Y|A{=}0) / P(Y|A{=}1)}$} \\
\midrule
\multirow{3}{*}{\textbf{MNIST}} 
  & Colour & Red / Green & 0.50 & \textcolor{green}{0.50 / 0.50} \\
  & Digit value & Below 5 / Over & 0.50 & \textcolor{green}{0.50 / 0.50} \\
  & Random & / & 0.50 & \textcolor{green}{0.50 / 0.50} \\
\midrule
\multirow{11}{*}{\textbf{MIMIC-CXR}} 
  & View Position & Lateral / Frontal & 0.65 & 0.16 / 0.26 \\
  & Patient Orientation & Recumbent / Erect & 0.77 & 0.35 / 0.20 \\
  & Procedure & Portable / Fixed scanner & 0.46 & 0.35 / 0.11 \\
  & Support\_Devices & Absent / Present & 0.23 & 0.17 / 0.39 \\
  & Gender & Female / Male & 0.51 & 0.20 / 0.25 \\
  & Insurance & Not-private / Private & 0.27 & 0.24 / 0.19 \\
  & Language & English / Non-English & 0.88 & \textcolor{green}{0.22 / 0.22} \\
  & Marital Status & Married / Unmarried & 0.44 & 0.20 / 0.25 \\
  & Race & White / Non-White & 0.66 & 0.17 / 0.25 \\
  & Age & Under 60 / Above 60 & 0.56 & 0.16 / 0.28 \\
  & Random & / & 0.50 & \textcolor{green}{0.22 / 0.23} \\
\midrule
\multirow{5}{*}{\textbf{HAM10000}} 
  & Sex & Male / Female & 0.54 & 0.11 / 0.22 \\
  & Age & Under 50 / Above 50 & 0.46 & 0.08 / 0.27 \\
  & Dataset & Rosendahl or Vidir\_molemax / Neither & 0.62 & 0.23 / 0.13 \\
  & Localisation of lesion & Central / Extremity & 0.38 & \textcolor{green}{0.16 / 0.17} \\
  & Random & / & 0.53 & \textcolor{green}{0.16 / 0.17} \\
\midrule
\multirow{6}{*}{\textbf{Civil\_comments}} 
  & Gender & No mention / Mention & 0.20 & 0.10 / 0.15 \\
  & Orientation & No mention / Mention & 0.03 & 0.11 / 0.26 \\
  & Religion & No mention / Mention & 0.16 & \textcolor{green}{0.11 / 0.13} \\
  & Race & No mention / Mention & 0.09 & 0.10 / 0.26 \\
  & Year of comment & Pre-2017 / Post-2017 & 0.76 & \textcolor{green}{0.11 / 0.11} \\
  & Random & / & 0.50 & \textcolor{green}{0.11 / 0.11} \\
\bottomrule
\end{tabular}}
\end{small}
\end{table}

\newpage
\section{Supplementary results on subgroup allocation sensitivity}
\label{sec: appendix-subgroup-sensitivity}

\subsection{Sensitivity of all attributes}
We report full results for subgroup allocation sensitivity across all attributes for two full fine-tuning and last-layer fine-tuning (Tables~\ref{table:fullftsubgroupslopes} and ~\ref{table:llrtsubgroupslopes} respectively).

\begin{table}[h]
\caption{Sensitivity of subgroup performance to fine-tuning allocation across datasets and attributes in full model fine-tuning. Reported values are esimated regression slopes (mean $\pm$ standard deviation across nine fine-tuning runs) for subgroup loss, 
balanced accuracy, and AUC as allocation varies from 0\% to 100\%. Larger absolute slopes indicate higher sensitivity.}
\label{table:fullftsubgroupslopes}
\centering
\begin{small}
\begin{tabular}{llrrr}
\toprule
Dataset & Attribute & Loss slope & Balanced Acc slope & AUC slope \\
\midrule
MIMIC & View Position                                & -0.027 (0.001) & 0.034 (0.003) & 0.012 (0.001) \\
MIMIC & Patient Orientation & -0.009 (0.001) & 0.013 (0.002) & 0.003 (0.000) \\
MIMIC & Procedure         & -0.026 (0.001) & 0.026 (0.002) & 0.016 (0.001) \\
MIMIC & Support\_Devices                            & -0.009 (0.001) & 0.010 (0.001) & 0.006 (0.000) \\
MIMIC & Gender                                      & -0.003 (0.001) & 0.004 (0.002) & 0.002 (0.000) \\
MIMIC & Insurance                                   & -0.002 (0.001) & 0.003 (0.002) & 0.001 (0.000) \\
MIMIC & Language                                    & -0.002 (0.001) & 0.001 (0.002) & 0.001 (0.000) \\
MIMIC & Marital\_Status                             & -0.001 (0.001) & -0.002 (0.001) & 0.000 (0.000) \\
MIMIC & Race\_cat                                   & -0.002 (0.001) & -0.003 (0.002) & 0.001 (0.000) \\
MIMIC & Age                                         & -0.006 (0.001) & 0.009 (0.002) & 0.003 (0.000) \\
MIMIC & Random                                      & -0.000 (0.001) & 0.000 (0.002) & 0.000 (0.000) \\
\midrule
HAM10000 & Sex          & -0.090 (0.008) & 0.058 (0.003) & 0.032 (0.001) \\
HAM10000 & Age          & -0.157 (0.011) & 0.049 (0.003) & 0.044 (0.001) \\
HAM10000 & Dataset      & -0.230 (0.011) & 0.129 (0.003) & 0.077 (0.002) \\
HAM10000 & Localization & -0.173 (0.010) & 0.082 (0.003) & 0.049 (0.001) \\
HAM10000 & Random       &  0.018 (0.008) & -0.010 (0.003) & -0.002 (0.001) \\
\midrule
Civil\_comments & Gender      & -0.034 (0.011) & 0.023 (0.003) & 0.017 (0.003) \\
Civil\_comments & Orientation & -0.243 (0.012) & 0.094 (0.004) & 0.089 (0.003) \\
Civil\_comments & Religion    & -0.048 (0.012) & 0.031 (0.003) & 0.025 (0.003) \\
Civil\_comments & Race        & -0.146 (0.014) & 0.037 (0.002) & 0.041 (0.002) \\
Civil\_comments & Year        & -0.022 (0.013) & 0.013 (0.003) & 0.011 (0.004) \\
Civil\_comments & Random      & -0.014 (0.011) & 0.008 (0.003) & 0.007 (0.003) \\
\bottomrule
\end{tabular}
\end{small}
\end{table}

\begin{table}[h!]
\caption{Sensitivity of subgroup performance to fine-tuning allocation across datasets and attributes in last-layer fine-tuning. Reported values are esimated regression slopes (mean $\pm$ standard deviation across nine fine-tuning runs) for subgroup loss, 
balanced accuracy, and AUC as allocation varies from 0\% to 100\%. Larger absolute slopes indicate higher sensitivity.}
\label{table:llrtsubgroupslopes}
\centering
\begin{small}
\begin{tabular}{llrrr}
\toprule
Dataset & Attribute & Loss slope & Balanced Acc slope & AUC slope \\ \midrule
MIMIC & View Position                                & -0.007 (0.001) & 0.001 (0.002) & -0.000 (0.000) \\
MIMIC & Patient Orientation & -0.008 (0.001) & -0.002 (0.001) & 0.000 (0.000) \\
MIMIC & Procedure           & -0.011 (0.001) & 0.014 (0.002) & 0.003 (0.000) \\
MIMIC & Support\_Devices                            & -0.005 (0.001) & 0.000 (0.001) & 0.000 (0.001) \\
MIMIC & Gender                                      & -0.000 (0.001) & 0.001 (0.002) & 0.000 (0.000) \\
MIMIC & Insurance                                   & -0.000 (0.001) & 0.000 (0.002) & -0.000 (0.000) \\
MIMIC & Language                                    & -0.001 (0.001) & -0.000 (0.001) & 0.001 (0.000) \\
MIMIC & Marital\_Status                             & -0.000 (0.001) & -0.000 (0.002) & -0.000 (0.000) \\
MIMIC & Race\_cat                                   & -0.000 (0.001) & -0.002 (0.002) & -0.000 (0.000) \\
MIMIC & Age                                         & -0.002 (0.001) & 0.002 (0.002) & 0.001 (0.000) \\
MIMIC & Random                                      &  0.000 (0.001) & 0.001 (0.002) & -0.000 (0.000) \\
\midrule
HAM10000 & Sex          & -0.009 (0.002) & 0.004 (0.003) & 0.005 (0.001) \\
HAM10000 & Age          & -0.032 (0.002) & -0.011 (0.003) & 0.004 (0.001) \\
HAM10000 & Dataset      & -0.023 (0.005) & 0.020 (0.003) & 0.010 (0.001) \\
HAM10000 & Localization & -0.014 (0.003) & 0.010 (0.003) & 0.004 (0.001) \\
HAM10000 & Random       &  0.000 (0.003) & -0.001 (0.003) & -0.000 (0.001) \\
\midrule
Civil\_comments & Gender      & -0.002 (0.002) & 0.001 (0.002) & 0.000 (0.002) \\
Civil\_comments & Orientation & -0.026 (0.002) & -0.000 (0.002) & 0.002 (0.002) \\
Civil\_comments & Religion    & -0.003 (0.002) & -0.000 (0.002) & 0.000 (0.002) \\
Civil\_comments & Race        & -0.022 (0.002) & 0.004 (0.002) & 0.000 (0.002) \\
Civil\_comments & Year        & -0.000 (0.002) & 0.000 (0.003) & 0.000 (0.002) \\
Civil\_comments & Random      & -0.000 (0.002) & -0.001 (0.002) & -0.000 (0.002) \\
\bottomrule
\end{tabular}
\end{small}
\end{table}

\subsection{Experimenting with more complex fits}
\label{sec:complex-fits}
We also experiment with the subgroup allocation scaling law model introduced by \citet{rolfRepresentationMattersAssessing2021a}, which describes subgroup loss as a function of group size and total sample size:  
\[
\ell_g(n_g, n) \;=\; \sigma_g^2 \, n_g^{-p_g} \;+\; \tau_g^2 \, n^{-q_g} \;+\; \delta_g,
\]
where $n_g$ is the number of samples in subgroup $g$, $n$ is the total number of samples, and $\sigma_g, \tau_g > 0$, $p_g, q_g > 0$. Following their setup, we conduct additional experiments where we vary the fine-tuning dataset size $K$ (to obtain more data points) and fit this model under the same parameter constraints. 

In our experiments, however, we find that the fitted parameters often come with very large standard deviations, suggesting instability in the estimates. {We also find that estimated parameters that are highly non-robust to small changes in how the fit is estimated or which data points are included. An example for two groups in MIMIC is given in Table~\ref{tab: rolf model fits}. While we recognise that linear fits are imperfect, and do not always precisely capture subgroup performance at extreme allocations (e.g., 0 or 100), we use them in our main analysis as they reliably and consistently capture the first-order pattern, telling us whether performance changes across allocations and how much it changes. This is sufficient to capture a strong and consistent relationship to latent representation separation.}

\begin{table}[h!]
\caption{Estimated power-law scaling model parameter fits and standard deviations.}
\label{tab: rolf model fits}
\centering
\begin{small}
\resizebox{\linewidth}{!}{
\begin{tabular}{llrrrrr}
\toprule
Attribute & Group & $\sigma_g $ & $p_g $ & $\tau_g $ & $q_g $ & $\delta_g $ \\
\midrule
Gender & 0 & 0.000 $\pm$ 0.000 & 1.298 $\pm$ 0.000 & 3136.744 $\pm$ 13745.833 & 2.000 $\pm$ 0.964 & 0.302 $\pm$ 0.020 \\
Gender & 1 & 278.269 $\pm$ 647.492 & 2.000 $\pm$ 0.650 & 3002.106 $\pm$ 13853.293 & 2.000 $\pm$ 1.015 & 0.348 $\pm$ 0.019 \\
ViewPosition & 0 & 4.843 $\pm$ 6.518 & 0.850 $\pm$ 0.387 & 1.523 $\pm$ 4.728 & 0.225 $\pm$ 1.188 & 0.000 $\pm$ 1.388 \\
ViewPosition & 1 & 1.035 $\pm$ 0.806 & 0.260 $\pm$ 0.336 & 2.175 $\pm$ 8.930 & 0.276 $\pm$ 1.359 & 0.000 $\pm$ 1.609 \\
\bottomrule
\end{tabular}}
\end{small}
\end{table}

\newpage

\section{Supplementary results on common hypotheses\\not holding}
\label{sec: appendix-common-hypotheses}

We additionally test the hypothesis that subgroups with a high class imbalance may lead to greater model sensitivity to their allocation by modifying $\mathcal{P}(Y)$. However, we see no consistent correlation between the two variables. 

\begin{figure}[h]
    \centering
    \fcolorbox{white}{white}{%
    \includegraphics[width=1\linewidth]{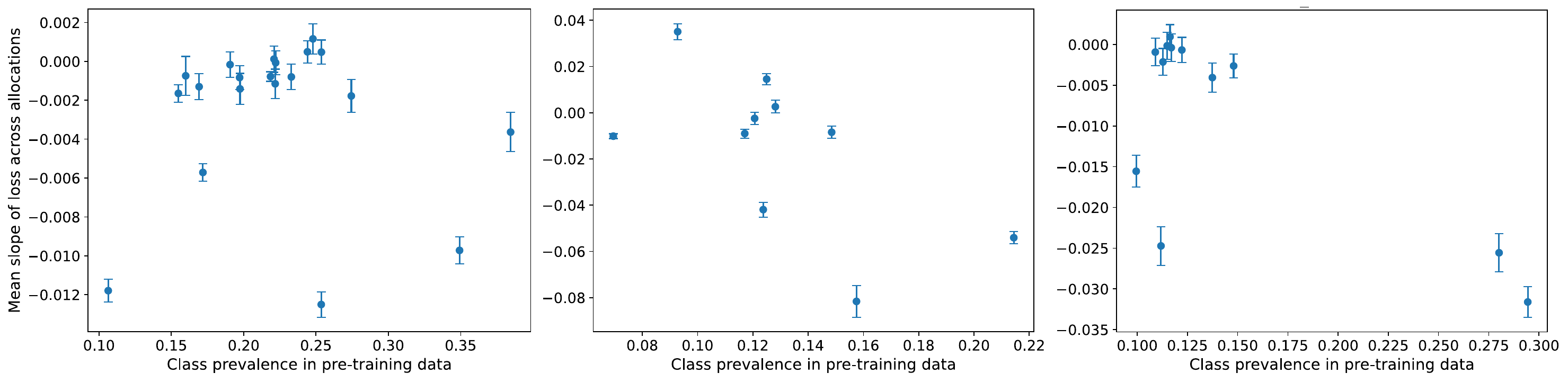}%
    }
    \caption{Performance on subgroups with high class imbalance during pre-training does not necessarily improve with increasing training data allocation. Each dot represents one subgroup with bars indicating variation across 9 fine-tuning runs.}
    \label{fig:correlation class prevalence slope}
\end{figure}

\newpage
\section{Full {theoretical results}}
\label{appendix:full-theory}
\subsection{{Full proofs}}
\begin{assumption}
Throughout this section, we impose the following conditions:

\begin{enumerate}
    \item Fine-tuning datasets $\mathtt D$ differ in subgroup allocations $\alpha^{(A)}$, but not in their overall label distribution $P(Y)$ or conditional distribution $P(Y\mid A)$.
    \item Fine-tuning is restricted to the last-layer classifier $g_\theta$, with representation $Z$ held fixed.
    \item We assume that all classifiers are Bayes optimal under realisability.
\end{enumerate}
\end{assumption}

\begin{lemma}
\label{appnedix:lemma:tv}
Let $f_{\theta, \eta}(x)=g_\theta(h_\eta(x))$ with representation $Z=h_\eta(X)$ and predictor $\hat Y=g_\theta(Z)$, where $g_\theta$ is the last layer. {$\mathbb{P}_\eta$} denotes the distribution over representations $Z$. 
Assume that
{\[
\mathsf{TV}\big(\mathbb{P}_\eta [Z\mid Y=y,A=1],\,\mathbb{P}_\eta[Z\mid Y=y,A=0]\big)\leq \varepsilon,
\]}
for $y\in\{0, 1\}$. Then, it holds
{$\bigl|\mathbb P_\eta(Z \mid Y=y,A=a)-\mathbb P_\eta(Z \mid Y=y)\bigr|
\le \varepsilon $ for all $a\in \{0,1\}$.}
\end{lemma}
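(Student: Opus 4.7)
The plan is to prove this by a direct marginalisation argument, exploiting the fact that the overall class-conditional distribution $\mathbb{P}_\eta(Z\mid Y=y)$ is a convex combination of the two subgroup-conditional distributions, with weights given by the conditional subgroup prevalences $\mathbb{P}(A=a\mid Y=y)$. I interpret the displayed quantity $\bigl|\mathbb{P}_\eta(Z\mid Y=y,A=a)-\mathbb{P}_\eta(Z\mid Y=y)\bigr|$ as the total variation distance between the two distributions, consistent with the hypothesis.

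First, I would apply the law of total probability conditional on $Y=y$. Writing $p:=\mathbb{P}(A=1\mid Y=y)$, we have
\begin{equation*}
\mathbb{P}_\eta(Z\mid Y=y)=(1-p)\,\mathbb{P}_\eta(Z\mid Y=y,A=0)+p\,\mathbb{P}_\eta(Z\mid Y=y,A=1).
\end{equation*}
Subtracting $\mathbb{P}_\eta(Z\mid Y=y,A=0)$ from both sides gives
\begin{equation*}
\mathbb{P}_\eta(Z\mid Y=y)-\mathbb{P}_\eta(Z\mid Y=y,A=0)=p\bigl[\mathbb{P}_\eta(Z\mid Y=y,A=1)-\mathbb{P}_\eta(Z\mid Y=y,A=0)\bigr],
\end{equation*}
and an analogous identity holds for $A=1$ with factor $(1-p)$.

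Next, I would invoke the fact that total variation distance is positively homogeneous under such scaling of signed measures: taking TV of both sides yields
\begin{equation*}
\mathsf{TV}\bigl(\mathbb{P}_\eta(Z\mid Y=y),\,\mathbb{P}_\eta(Z\mid Y=y,A=0)\bigr)=p\cdot\mathsf{TV}\bigl(\mathbb{P}_\eta(Z\mid Y=y,A=1),\,\mathbb{P}_\eta(Z\mid Y=y,A=0)\bigr)\leq p\varepsilon\leq\varepsilon,
\end{equation*}
and symmetrically the $A=1$ case is bounded by $(1-p)\varepsilon\leq\varepsilon$. Since both $p,1-p\in[0,1]$, both bounds are at most $\varepsilon$, giving the claim for each $a\in\{0,1\}$.

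There is no real obstacle here: the result is essentially the observation that a mixture distribution cannot be further from any of its components (in TV) than the components are from each other, weighted by the mixture coefficient. The only point requiring a small amount of care is to make explicit that the notation $|\cdot|$ in the conclusion refers to TV distance rather than pointwise absolute difference of densities, and to justify the homogeneity step by viewing the difference of two probability measures as a signed measure with $\mathsf{TV}(c\mu)=|c|\,\mathsf{TV}(\mu)$. Everything else is immediate from the convex decomposition.
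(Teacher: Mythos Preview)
Your proposal is correct and follows essentially the same route as the paper's proof: both express $\mathbb{P}_\eta(Z\mid Y=y)$ as the convex combination $(1-p)\,\mathbb{P}_\eta(Z\mid Y=y,A=0)+p\,\mathbb{P}_\eta(Z\mid Y=y,A=1)$, subtract one component, and use homogeneity of TV on signed measures together with $p,1-p\in[0,1]$ to obtain the bound. Your remark about interpreting the $|\cdot|$ notation as TV distance is apt, and the paper handles this by equating the TV distance with the supremum of the pointwise difference.
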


\begin{proof}
\label{lemmaproof}
We only prove the case for $A=1$, since the case for $A = 0$ is analogous. For simplicity, write
\begin{align*}
&\mu_a(\cdot\mid y):={\mathbb P_\eta}\bigl(Z\,\big|\, Y=y,A=a\bigr)\\\qquad
&\mu(\cdot\mid y:)={\mathbb P_\eta}\bigl(Z\,\big|\, Y=y\bigr)\\
& \pi_y:={\mathbb P_\eta}(A=1\mid Y=y)
\end{align*}
By the law of total probability, we have that
\begin{equation}
\label{law:total_prob}
\mu(\cdot\mid y)=\pi_y\,\mu_1(\cdot\mid y)+(1-\pi_y)\,\mu_0(\cdot\mid y).
\end{equation}
View $\mu_0(\cdot\mid y)$ and $\mu_1(\cdot\mid y)$ as probability measures on the same measurable space. By using \eqref{law:total_prob}, we have that
\[
\mu_1(\cdot\mid y)-\mu(\cdot\mid y)
= \mu_1(\cdot\mid y) - \bigl(\pi_y\mu_1(\cdot\mid y)+(1-\pi_y)\mu_0(\cdot\mid y)\bigr)
= (1-\pi_y)\,\bigl(\mu_1(\cdot\mid y)-\mu_0(\cdot\mid y)\bigr).
\]
Taking total variation norms and using homogeneity of total variation for signed measures,
\[
\mathrm{TV}\!\bigl(\mu_1(\cdot\mid y),\mu(\cdot\mid y)\bigr)
= (1-\pi_y)\,\mathrm{TV}\!\bigl(\mu_1(\cdot\mid y),\mu_0(\cdot\mid y)\bigr).
\]
By the defining property of total variation,
\[
\sup_{z}\, \bigl|\mu_1(z\mid y)-\mu(z\mid y)\bigr|
= \mathrm{TV}\!\bigl(\mu_1(\cdot\mid y),\mu(\cdot\mid y)\bigr)
\le (1-\pi_y)\,\varepsilon
\leq\varepsilon,
\]
where we have used that $1-\pi_y \leq 1$. Hence, it holds $\bigl|{\mathbb P_\eta}(Z\mid Y=y,A=1)-\mathbb P_\eta(Z\mid Y=y)\bigr|
\leq\varepsilon$, as claimed.
\end{proof}
We can use this lemma to prove the main result. 
\begin{theorem}[Group accuracy parity]
\label{appendix:fullthoerem}
Let $f_{\theta, \eta}(x)=g_\theta(h_\eta(x))$ with representation $Z=h_\eta(X)$ and predictor $\hat Y=g_\theta(Z)$, where $g_\theta$ is the last layer. {For a dataset $\mathtt D$}, define the quantity
\[
{\mathsf{TV}(\mathtt D)} := \mathbb{E}_{y}\left [ \mathsf{TV}\big({\mathbb P_\eta} [Z\mid Y=y,A=1],\,{\mathbb P_\eta}[Z\mid Y=y,A=0]\big) \right ].
\]
Suppose that the model is fine-tuned on two {balanced} datasets $\mathtt D'$, $\mathtt D''$ with different proportions of the attribute $A$, yielding parameters $\theta'$ and $\theta''$.
If $\mathsf{TV}(\mathtt D') \leq \varepsilon$ and $ \mathsf{TV}(\mathtt D'') \leq \varepsilon$ \footnote{{If the TVs are bounded by different constants, e.g., $\varepsilon$ and $\delta$, then the upper bound can be rewritten as: $2\varepsilon + 2\delta + \absl{{\textsc{Acc}}_{\theta'} - {\textsc{Acc}}_{\theta''} }$.}}, then  
$$
\absl{{\textsc{Acc}}_{\theta'}(A=a) - {\textsc{Acc}}_{\theta''}(A=a) } \leq 4\varepsilon + \absl{{\textsc{Acc}}_{\theta'} - {\textsc{Acc}}_{\theta''} }
$$
for all $a\in \{0, 1\}$.
\end{theorem}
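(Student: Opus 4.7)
The plan is a two-stage reduction: first prove a per-model gap $|\textsc{Acc}_\theta(A{=}a) - \textsc{Acc}_\theta| \le 2\varepsilon$ for each $\theta \in \{\theta', \theta''\}$, then chain the two models together by triangle inequality:
\begin{align*}
|\textsc{Acc}_{\theta'}(A{=}a) - \textsc{Acc}_{\theta''}(A{=}a)|
&\le |\textsc{Acc}_{\theta'}(A{=}a) - \textsc{Acc}_{\theta'}| + |\textsc{Acc}_{\theta'} - \textsc{Acc}_{\theta''}| \\
&\quad + |\textsc{Acc}_{\theta''} - \textsc{Acc}_{\theta''}(A{=}a)|.
\end{align*}
The two outer terms contribute $2\varepsilon$ each, producing the $4\varepsilon$ on the right-hand side, while the middle term is exactly the overall-accuracy gap already in the statement.

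For the per-model step I would condition on $Y$ and write
\[
\textsc{Acc}_\theta(A{=}a) = \sum_y \mathbb{P}(Y{=}y \mid A{=}a)\,\mathbb{P}(\hat Y{=}y \mid Y{=}y, A{=}a),
\]
\[
\textsc{Acc}_\theta = \sum_y \mathbb{P}(Y{=}y)\,\mathbb{P}(\hat Y{=}y \mid Y{=}y).
\]
Because $\hat Y = g_\theta(Z)$ is a $\{0,1\}$-valued function of $Z$, Lemma~\ref{lemma:tv} upgrades into the pointwise prediction-probability bound $\bigl|\mathbb{P}(\hat Y{=}y \mid Y{=}y, A{=}a) - \mathbb{P}(\hat Y{=}y \mid Y{=}y)\bigr| \le \varepsilon_y$, where $\varepsilon_y$ is the class-conditional TV at label $y$ and $\mathbb{E}_Y[\varepsilon_Y] \le \varepsilon$. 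Averaging in $y$ with weights $\mathbb{P}(Y{=}y)$ telescopes into an $\varepsilon$-level contribution; a second $\varepsilon$ is absorbed by the residual weight-mismatch term $\mathbb{P}(Y{=}y \mid A{=}a) - \mathbb{P}(Y{=}y)$, which under assumption~(ii) is the \emph{same} function of $y$ on $\mathtt D'$ and $\mathtt D''$ and therefore cancels in the outer triangle step. This yields $|\textsc{Acc}_\theta(A{=}a) - \textsc{Acc}_\theta| \le 2\varepsilon$, and substituting delivers the claimed $4\varepsilon + |\textsc{Acc}_{\theta'} - \textsc{Acc}_{\theta''}|$.

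The main obstacle is the bookkeeping in the per-model step: the hypothesis is an expectation $\mathsf{TV}(\mathtt D) = \mathbb{E}_y[\varepsilon_y] \le \varepsilon$, not a uniform-over-$y$ bound, so one must align this averaging measure with the $\mathbb{P}(Y{=}y)$ and $\mathbb{P}(Y{=}y\mid A{=}a)$ weights appearing in the two accuracy decompositions. Assumption~(ii) is what makes this alignment succeed, by freezing the mismatch $\mathbb{P}(Y{\mid}A) - \mathbb{P}(Y)$ across datasets so that it is inert under the outer triangle expansion. A minor side remark is that the per-model bound can be sharpened to $\varepsilon$ if one handles the two sources of error (Lemma~\ref{lemma:tv} and the weight mismatch) jointly rather than via a separate triangle; the looser constant in the theorem reflects the more modular argument.
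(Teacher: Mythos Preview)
Your plan has a genuine gap at the per-model step. The claimed bound $|\textsc{Acc}_\theta(A{=}a)-\textsc{Acc}_\theta|\le 2\varepsilon$ is not true under the stated assumptions. Writing
\[
\textsc{Acc}_\theta(A{=}a)-\textsc{Acc}_\theta
=\sum_y \mathbb{P}(Y{=}y\mid A{=}a)\bigl[\mathbb{P}(\hat Y{=}y\mid Y{=}y,A{=}a)-\mathbb{P}(\hat Y{=}y\mid Y{=}y)\bigr]
+\sum_y\bigl[\mathbb{P}(Y{=}y\mid A{=}a)-\mathbb{P}(Y{=}y)\bigr]\mathbb{P}(\hat Y{=}y\mid Y{=}y),
\]
the first sum is indeed controlled by Lemma~\ref{lemma:tv}, but the second ``weight-mismatch'' sum is governed by $\mathbb{P}(Y\mid A)-\mathbb{P}(Y)$, which has nothing to do with $\varepsilon$; assumption~(ii) only says this quantity is the \emph{same} across $\mathtt D'$ and $\mathtt D''$, not that it is small. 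Your attempt to rescue this by cancellation in the outer triangle fails because you have already taken absolute values of the three pieces separately: once
\[
|\textsc{Acc}_{\theta'}(A{=}a)-\textsc{Acc}_{\theta''}(A{=}a)|\le |\textsc{Acc}_{\theta'}(A{=}a)-\textsc{Acc}_{\theta'}|+|\textsc{Acc}_{\theta'}-\textsc{Acc}_{\theta''}|+|\textsc{Acc}_{\theta''}-\textsc{Acc}_{\theta''}(A{=}a)|
\]
is written, nothing can cancel across the first and third terms.

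The paper's proof avoids this by \emph{not} routing through $\textsc{Acc}_\theta$. It compares $\textsc{Acc}_{\theta'}(A{=}a)$ and $\textsc{Acc}_{\theta''}(A{=}a)$ directly, exploiting that both share the \emph{same} weights $p_{a,y}=\mathbb{P}(Y{=}y\mid A{=}a)$; this is exactly where assumption~(ii) enters, and it eliminates the weight-mismatch term altogether. After bounding $p_{a,y}\le 1$, the difference reduces to $\sum_y|\mathbb{P}_{\theta'}(\hat Y{=}y\mid Y{=}y,A{=}a)-\mathbb{P}_{\theta''}(\hat Y{=}y\mid Y{=}y,A{=}a)|$, which is then split via the \emph{unconditioned} prediction probabilities $\mathbb{P}_\theta(\hat Y{=}y\mid Y{=}y)$ (not the overall accuracies). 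Lemma~\ref{lemma:tv} handles the two outer pieces, and only at the very last step is $\sum_y \mathbb{P}_\theta(\hat Y{=}y\mid Y{=}y)$ identified with $\textsc{Acc}_\theta$ using the balanced/Bayes-optimal assumption.
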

\begin{proof}
\label{theoremproof}

We can write the accuracy via conditioning on the true label as
\begin{align}
\label{eq:acc_1}
\textsc{Acc}_{\theta'}(A=a)=\mathbb P_{\theta'}(\hat Y=Y\mid A=a) & =\sum_y \mathbb P_{\theta'}(\hat Y=y\mid Y=y, A=a)\,\mathbb P_{\theta'}(Y=y\mid A=a)
\end{align}
and 
\begin{align}
\label{eq:acc_2}
\textsc{Acc}_{\theta''}(A=a)=\mathbb P_{\theta''}(\hat Y=Y\mid A=a) & =\sum_y \mathbb P_{\theta''}(\hat Y=y\mid Y=y, A=a)\,\mathbb P_{\theta''}(Y=y\mid A=a)
\end{align}
We know by construction that class-conditional label distributions match, i.e.,
\begin{align}
\label{eq:acc_3}
\mathbb P_{\theta'}(Y=y\mid A=a) = \mathbb P_{\theta''}(Y=y\mid A=a) =: p_{a,y}
\end{align}
Combining \eqref{eq:acc_1}-\eqref{eq:acc_3} it holds
\begin{align}
& \absl{\textsc{Acc}_{\theta'}(A=a)- \textsc{Acc}_{\theta''}(A=a)} \nonumber \\
& = \absl{\sum_y \mathbb P_{\theta'}(\hat Y=y\mid Y=y, A=a)\,p_{a,y} - \sum_y \mathbb P_{\theta''}(\hat Y=y\mid Y=y, A=a)\,p_{a,y}} \nonumber \\
& \leq \sum_y \absl{\mathbb P_{\theta'}(\hat Y=y\mid Y=y, A=a) - \mathbb P_{\theta''}(\hat Y=y\mid Y=y, A=a)} \absl{p_{a,y}} \nonumber \\
& \leq \sum_y \absl{\mathbb P_{\theta'}(\hat Y=y\mid Y=y, A=a) - \mathbb P_{\theta''}(\hat Y=y\mid Y=y, A=a)}, \label{eq:acc_4}
\end{align}
where we have used the triangle inequality, together with the fact that $ 0 \leq p_{a,y} \leq 1$. Furthermore, using the triangle inequality and Lemma \ref{lemma:tv}{\footnote{Note that $\hat Y$ is parametrised by both $\eta$ and $\theta$.}}, it holds
\begin{align}
 \sum_y &\absl{\mathbb P_{\theta'}(\hat Y=y\mid Y=y, A=a) - \mathbb P_{\theta''}(\hat Y=y\mid Y=y, A=a) }\nonumber \\
 & \leq \sum_y \absl{\mathbb P_{\theta'}(\hat Y=y\mid Y=y, A=a) - \mathbb P_{\theta'}(\hat Y=y\mid Y=y)}\nonumber  \\
& \quad + \absl{\sum_y \mathbb P_{\theta'}(\hat Y=y\mid Y=y) - \mathbb P_{\theta''}(\hat Y=y\mid Y=y)}\nonumber \\
& \quad + \sum_y  \absl{\mathbb P_{\theta''}(\hat Y=y\mid Y=y, A=a)  - \mathbb P_{\theta''}(\hat Y=y\mid Y=y)}\nonumber  \\
 & \leq 4 \varepsilon + \absl{ \sum_y \mathbb P_{\theta'}(\hat Y=y\mid Y=y) - \mathbb P_{\theta''}(\hat Y=y\mid Y=y)}  \label{eq:acc_5}
\end{align}
The claim follows by combining \eqref{eq:acc_4} with \eqref{eq:acc_5}, and noting that since the parameters $\theta'$ and $\theta''$ are Bayes optimal under realisability, it holds
\begin{equation*}
\absl{ \sum_y \mathbb P_{\theta'}(\hat Y=y\mid Y=y) - \mathbb P_{\theta''}(\hat Y=y\mid Y=y) }= \absl{{\textsc{Acc}}_{\theta'} - {\textsc{Acc}}_{\theta''} }
\end{equation*}
\end{proof}

{\subsection{Further remarks on theory assumptions and bounds}}
{\subsubsection{Theory assumptions on fine-tuning datasets}
\label{remark:assumptions}
The assumption that $P(Y|A)$ is stable should always be satisfied if samples from $A$ are randomly selected when re-allocating the dataset. The assumption that $P(Y)$ is unchanged is more restrictive in practice. It is satisfied if both groups $A=0$ and $A=1$ have the same distribution of $Y$ (i.e., $P(Y|A=0)=P(Y|A=1)$). This is true for 10 out of 25 of the subgroups in our experiments (Table \ref{tab:subgroup-attributes}). While we relax this assumption for our experiments, and find the same consistent trend, we must make it for the purposes of the theorem, in particular so that we can isolate the effects of changes in A rather than changes in the label Y.}

{\subsubsection{Cross-attribute effects}
\label{remark:cross_attr}
Theorem~\ref{theorem 1} extends naturally to scenarios where balancing with respect to another attribute \(B\) indirectly alters the allocation of \(A\), as discussed in \citet{liWhacAMoleDilemmaShortcuts2023}. 
If the empirical distributions \(P(A)\) and \(P(Y\mid A)\) remain unchanged when reweighting by \(B\), then the subgroup accuracies with respect to \(A\) are stable. 
However, if \(A\) and \(B\) are correlated so that changing the allocation of \(B\) induces a shift in the effective distribution of \(A\), then the same bound applies with respect to the induced change in \(A\). 
In particular, differences in subgroup accuracy across \(A\) are bounded by the representation separation (TV distance) for \(A\) and the magnitude of the change in \(A\)-allocation.
Hence, balancing \(B\) can only affect \(A\)-performance insofar as it implicitly changes the distribution of \(A\) observed during fine-tuning.}

{\subsubsection{Extending to multiple discrete groups}
We note that our Theorem~\ref{theorem 1} can also be extended to cases with non-binary attributes, i.e., for $A\in\{1,\dots,K\}$. Define
\[
\mathsf{TV}_K(\mathcal D):=\mathbb{E}_{y}\!\left[\max_{a,b\in[K]}
\mathsf{TV}\!\big(\mathbb{P}_\eta(Z\mid Y=y,A=a),\ \mathbb{P}_\eta(Z\mid Y=y,A=b)\big)\right].
\]
Under the same assumptions, if $\mathsf{TV}_K(\mathcal D')\le\varepsilon$ and $\mathsf{TV}_K(\mathcal D'')\le\varepsilon$, then for all $a\in[K]$,
\[
\bigl|\textsc{Acc}_{\theta'}(A=a)-\textsc{Acc}_{\theta''}(A=a)\bigr|
\le 4\varepsilon+\bigl|\textsc{Acc}_{\theta'}-\textsc{Acc}_{\theta''}\bigr|.
\]}

{\subsubsection{Upper bound values}
In our experiments, we observe that $\absl{{\textsc{Acc}}_{\theta'} - {\textsc{Acc}}_{\theta''} }$ is neglible. As shown in Figure~\ref{fig:overall acc sd plots}, there is very little variation in overall model performance across fine-tuning allocations, with a mean accuracy standard deviation of 0.016, 0.002, 0.001, and 0.001 for MNIST, MIMIC, HAM, and Civil\_comments respectively.}

\begin{figure}[h]
    \centering
    \fcolorbox{white}{white}{%
    \includegraphics[width=1\linewidth]{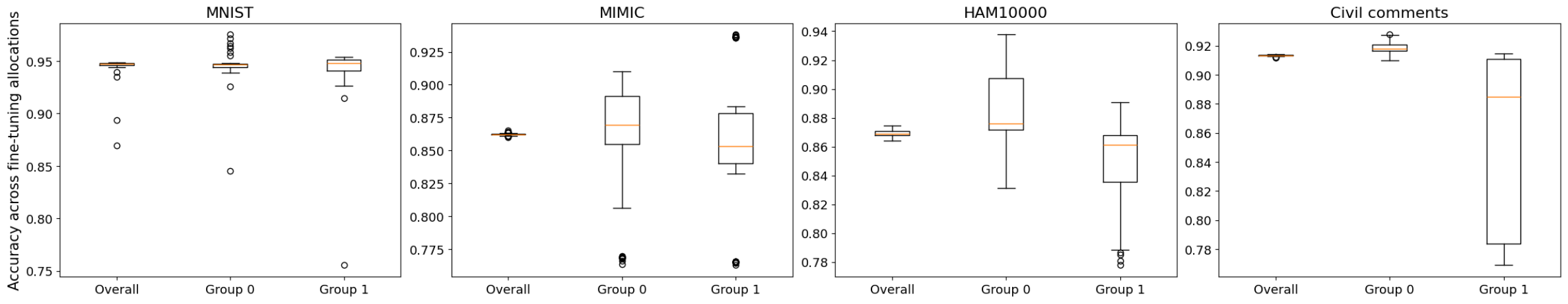}%
    }
    \caption{{Overall and group-wise accuracy across all fine-tuning runs for MNIST, MIMIC, HAM10000, and Civil\_comments.}}
    \label{fig:overall acc sd plots}
\end{figure}

{We further calculate the value of the upper bound for each subgroup, and find that it is generally informative, with values between 0 and 0.4, except for very high TV subgroups, where it is sometimes above 1. HAM10000 appears to be an outlier, as every subgroup (even random groups), have very high TV ($>0.25$) causing the upper bound to be consistently above 1 (which the LHS trivially satisfies). This is most likely because the HAM10000 dataset is of a much smaller size (1000 images, with only 175 positive samples) which causes the TV estimates to be unreliable and very sensitive to small changes like the bin size used in histogram estimation.}

\begin{figure}[h]
    \centering
    \fcolorbox{white}{white}{%
    \includegraphics[width=1\linewidth]{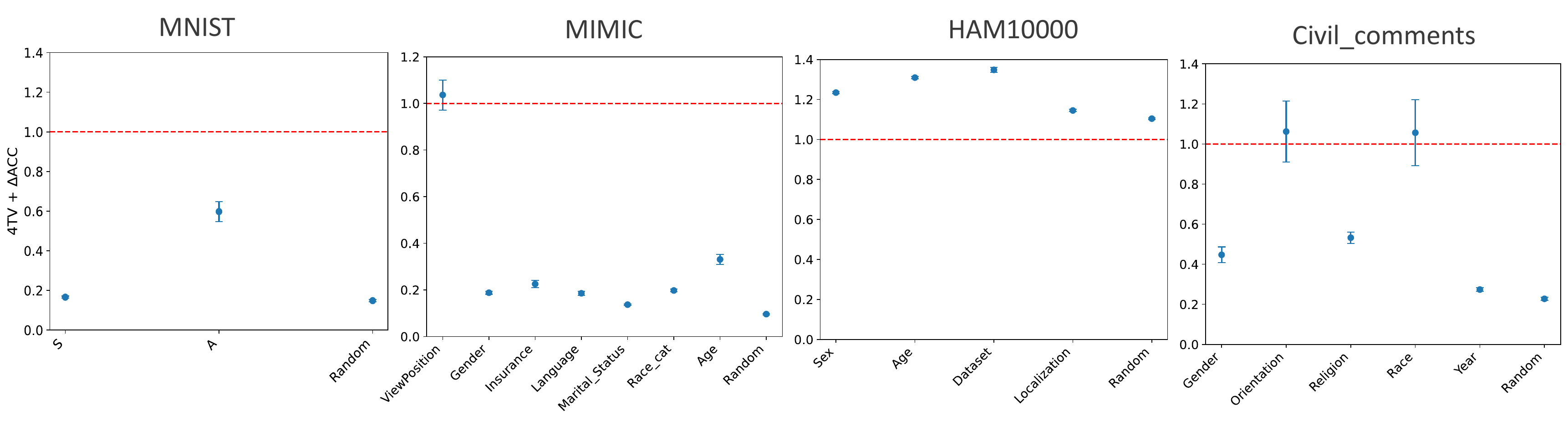}%
    }
    \caption{{Value of upper bound for MNIST, MIMIC, HAM10000, and Civil\_comments subgroups, with the trivial upper bound (accuracy difference of 1) marked in red.}}
    \label{fig:upper bound}
\end{figure}

\newpage
\section{Supplementary results on representation invariance}
\label{sec: appendix-representation-invariance}

\subsection{Additional details on  distance metrics}
We analyse the pre-trained models' representations by extracting the penultimate layer embeddings of the test inputs (e.g., 1024-dimensional vector $\vz$ for DenseNet121) and projecting them to a lower dimensional space via principal component analysis. To reduce noise, we retain the top-$k$ principal components that explain at least 70\% of the variance (in practice $k \in [2,81]$ depending on the dataset and model) and measure the mean \textbf{total variation distance }(TV) between the representations of examples in subgroups $a^{(j)}_0$ and $a^{(j)}_1$  in this lower dimensional space. We condition on $Y$ to control for class imbalances. 

Given subgroup embeddings $\vz_{a^{(j)}_0}$ and $\vz_{a^{(j)}_1}$, we estimate their distributions by constructing normalized histograms along each principal component dimension. For a given component, let $p$ and $q$ denote the resulting discrete probability mass functions over bins $b$. The TV is then
$\text{TV}(p,q) = \tfrac{1}{2} \sum_{b} |p(b) - q(b)|,$
where the sum is over histogram bins. In practice, we compute TV for each principal component dimension separately over 50 bins and report the average across dimensions. TV provides a bounded ($[0,1]$) measure of separation. Values near 0 indicate nearly identical marginal distributions, while values near 1 indicate almost complete disjointness. 

For completeness, we also explore additional distance metrics including the \textbf{Wasserstein distance} (WD) and the \textbf{Fréchet distance} (FD) which emphasise distinct representation differences.

For WD, we compute the 1D WD along each principal component vector of the embeddings and report the mean across components:
\[
\text{WD}\big(\vz_{a^{(j)}_0}, \vz_{a^{(j)}_1}\big) 
= \frac{1}{k} \sum_{i=1}^{k} W_1\!\big(\vz_{a^{(j)}_0}^{(i)}, \vz_{a^{(j)}_1}^{(i)}\big),
\]
where \(\vz_{a^{(j)}_0}^{(i)}\) and \(\vz_{a^{(j)}_1}^{(i)}\) denote the projections of the embeddings from subgroups \(a^{(j)}_0\) and \(a^{(j)}_1\) onto the \(i\)-th principal component, and \(W_1\) is the univariate Wasserstein-1 distance.

The FD approximates the subgroup feature distributions as multivariate Gaussians. Let \((\mu_{a^{(j)}_0}, \Sigma_{a^{(j)}_0})\) and \((\mu_{a^{(j)}_1}, \Sigma_{a^{(j)}_1})\) be the empirical means and covariances of the embeddings for the two subgroups. Then
\[
\text{FD}\big(\vz_{a^{(j)}_0}, \vz_{a^{(j)}_1}\big) 
= \|\mu_{a^{(j)}_0} - \mu_{a^{(j)}_1}\|_2^2 
+ \operatorname{Tr}\Big(\Sigma_{a^{(j)}_0} + \Sigma_{a^{(j)}_1} - 2(\Sigma_{a^{(j)}_0}^{1/2}\Sigma_{a^{(j)}_1}\Sigma_{a^{(j)}_0}^{1/2})^{1/2}\Big).
\]

Together, these measures provide different estimates of on invariance: TVD emphasizes the largest discrepancies in probability mass between subgroups, WD gives a precise estimate of marginal distribution shift (including differences in distributional shape), and FD captures coarse differences in means and covariances.

\subsection{Representation distances obtained across the four datasets}

We highlight the variation in TV across subgroups in the four datasets in Figure \ref{fig:TV differences all datasets}.

\begin{figure}[h]
    \centering
    \includegraphics[width=1\linewidth]{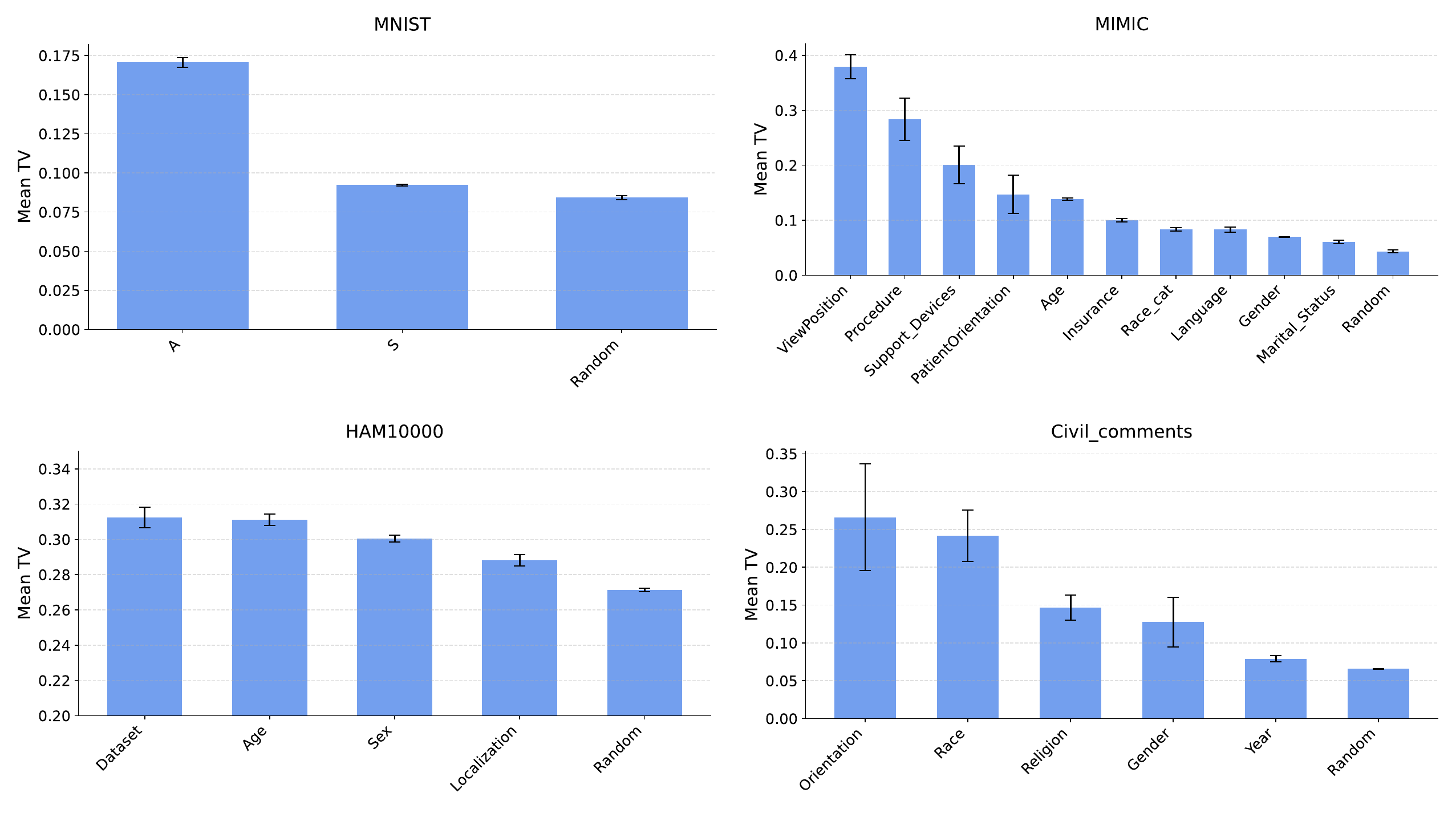}
    \caption{Total variation (TV) distances across subgroups in the pre-trained model's penultimate-layer representation space. We report mean and standard deviation across three random seeds.}
    \label{fig:TV differences all datasets}
\end{figure}

\subsection{Metrics are robust}
All three distance metrics are correlated and appear robust to whether the calculation is done on the dimension reduced space or full feature space. {We calculate representation distance on a dimension-reduced space to reduce noise (our test sample size is sometimes smaller than our latent embedding size) and for improved computational efficiency.}

\begin{figure}[h]
    \centering
    \includegraphics[width=0.9\linewidth]{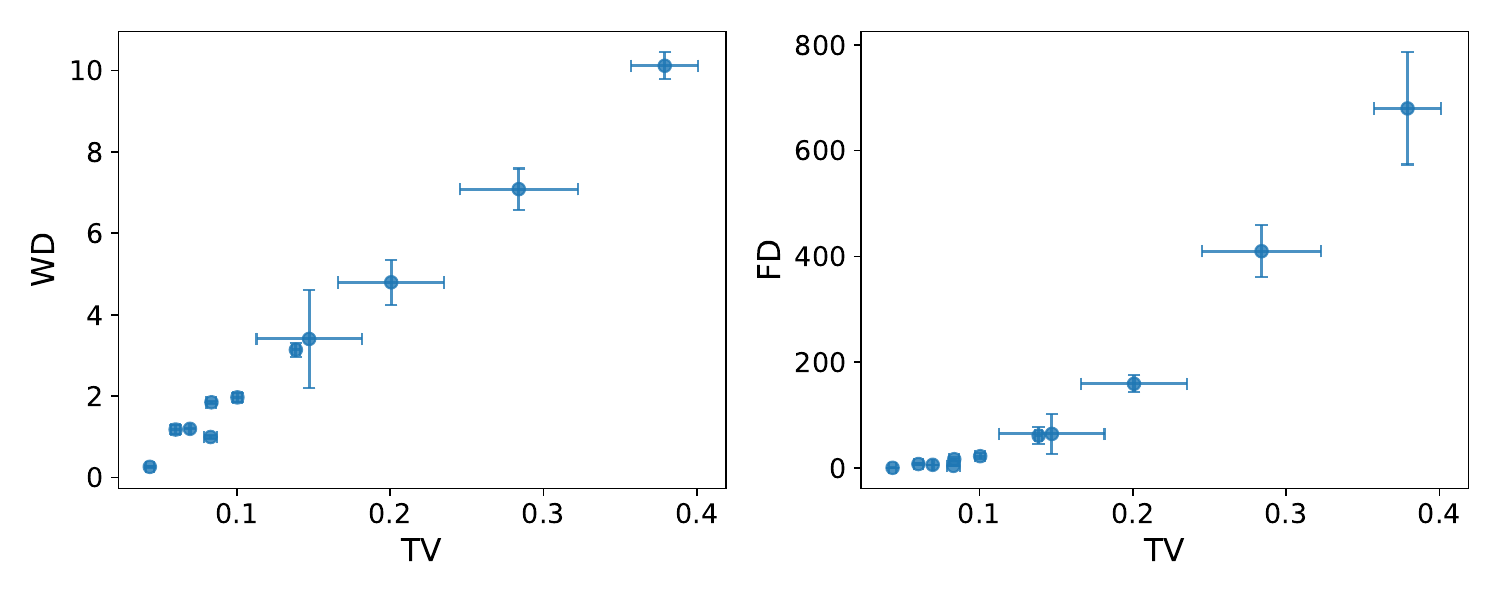}
    \caption{Total variation distance (TV), wasserstein distance (WD), and Fréchet distance (FD) across subgroups appear correlated in MIMIC. Each dot represents a subgroup with error bars representing standard deviation across three pre-training runs.}
    \label{fig:correlation distance metrics}
\end{figure}

\begin{figure}[h]
    \centering
    \fcolorbox{white}{white}{%
    \includegraphics[width=1\linewidth]{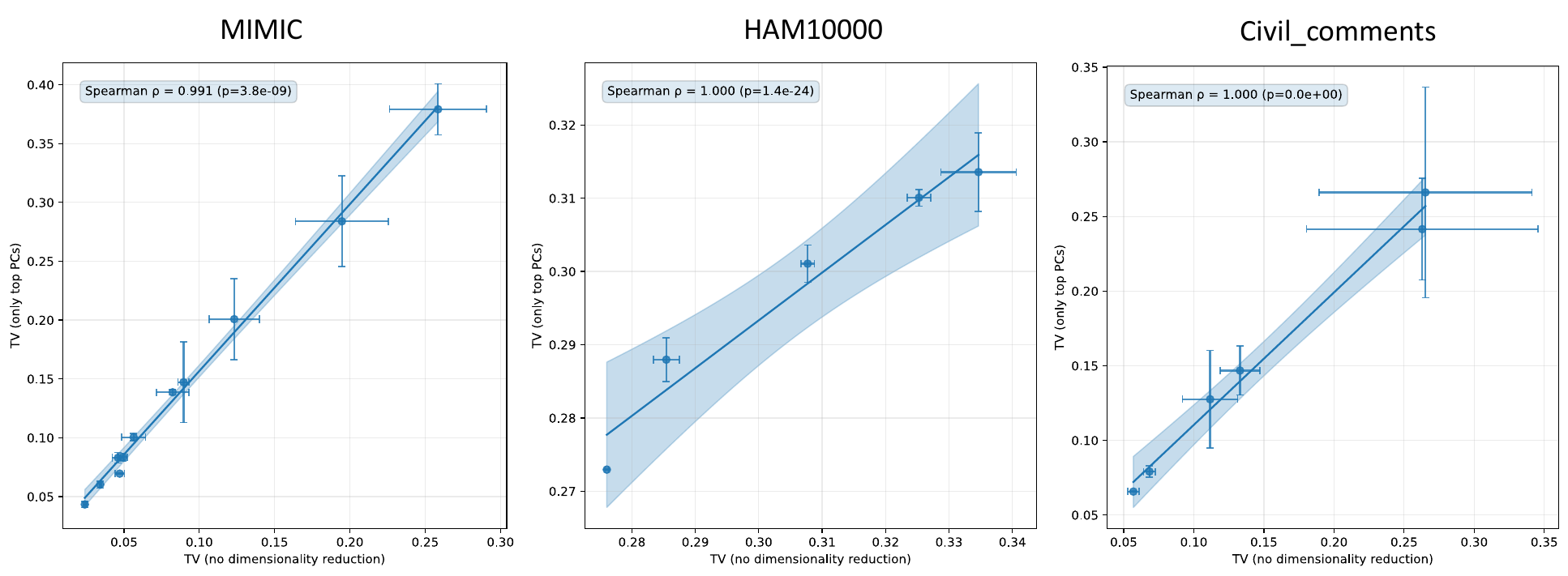}%
    }
    \caption{Representation distance metrics appear robust to whether they are calculated on the full latent space or on the lower dimensional space (after PCA). Each dot represents a subgroup in each of the three datasets with error bars representing standard deviation across three pre-training runs.}
    \label{fig:mmd fd top pcs no dr}
\end{figure}

\begin{figure}[h]
    \centering
    \fcolorbox{white}{white}{%
    \includegraphics[width=1\linewidth]{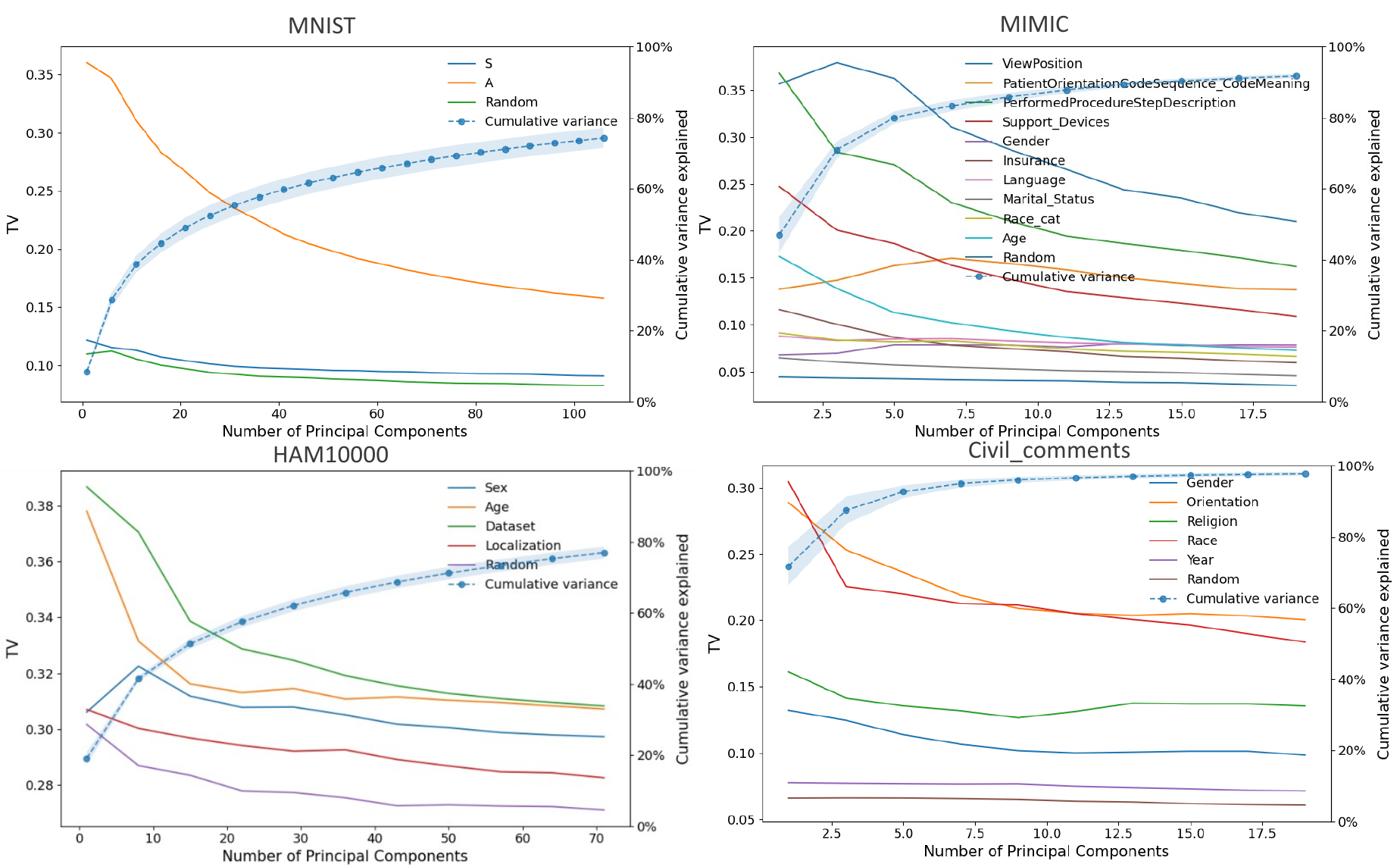}%
    }
    {\caption{Representation distance metrics appear robust to the number of principal components on which they are calculated (for sufficiently high explained variance). Each line shows the estimated TV between one attribute calculated on an increasing number of principal component vectors.}}
    \label{fig:tv increasing PCs}
\end{figure}

\clearpage

\section{Supplementary results on the correlation between representation distance and subgroup allocation sensitivity}
\label{sec: appendix-correlation-distance-sensitivity}
Figure~\ref{fig:full corr distances} extends the main text by showing subgroup allocation sensitivity against additional distance metrics across our datasets. The correlation is strong across metrics, supporting the latent separation hypothesis.  

\begin{figure}[h]
    \centering
    \includegraphics[width=0.9\linewidth]{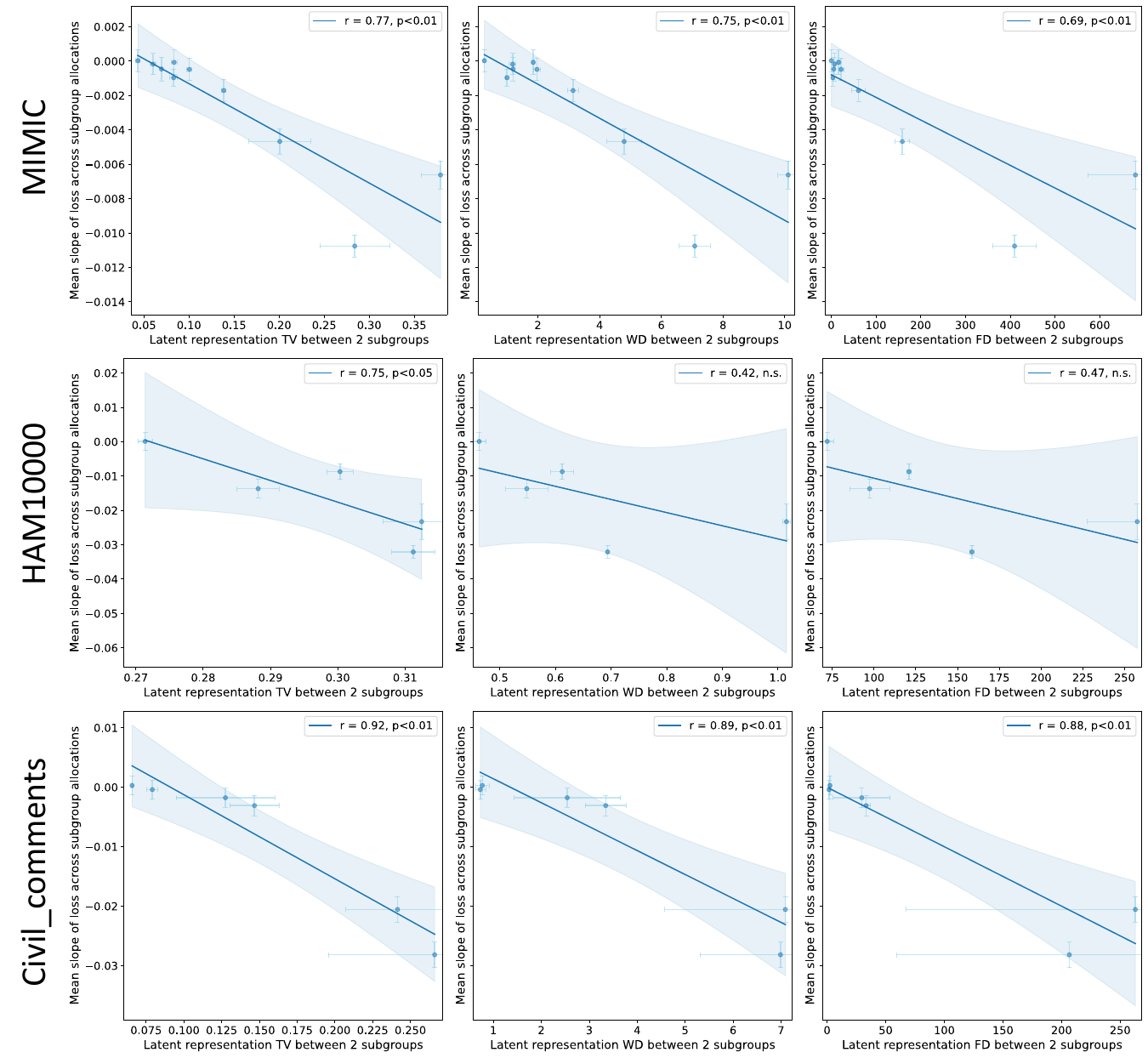}
    \caption{Sensitivity to subgroup allocation is correlated with separation in the pre-trained model's representation space (as measured by total variation distance, wasserstein distance, and Fréchet distance) across the three datasets. Each dot represents mean distance and loss slope for one subgroup, averaged across 9 fine-tuning runs, with bars corresponding to standard deviations, and Pearson correlation coefficients also shown.}
    \label{fig:full corr distances}
\end{figure}


\newpage

\section{Extending setup to {other training regimes}}
\label{sec: appendix-training-regimes}

\subsection{Full fine-tuning}
\label{sec: appendix-full-finetuning}

We additionally extend our setup from last-layer fine-tuning to full model fine-tuning. Overall, the correlations are equally significant, and the magnitude of the effects larger (Figures~\ref{fig:correlation slope distances full fine-tuning} and~\ref{fig: generalisation and reps full fine-tuning}). We attribute this to the fact that in practice, separation of subgroup representations does not change with subgroup allocations (Figure~\ref{fig:reps at varying allocs}). Therefore, while some of our theoretical assumptions still hold, the effect increases as full fine-tuning allows for greater modification of network parameters. 

{We hypothesise that representation separation is so consistent between pre-training and fine-tuning and across allocations (Figure \ref{fig:reps at varying allocs}) because of non-spurious shifts in the features necessary to predict $Y$ across groups. For example, in MIMIC, the TV distance between different X-ray views (frontal or lateral) is consistently high. This is most likely because the necessary features to diagnose disease differ depending on the viewpoint, and therefore, across allocations, a model must maintain a high separation in its latent space to perform well on both groups.}

\begin{figure}[h]
    \centering
    \fcolorbox{white}{white}{%
    \includegraphics[width=1\linewidth]{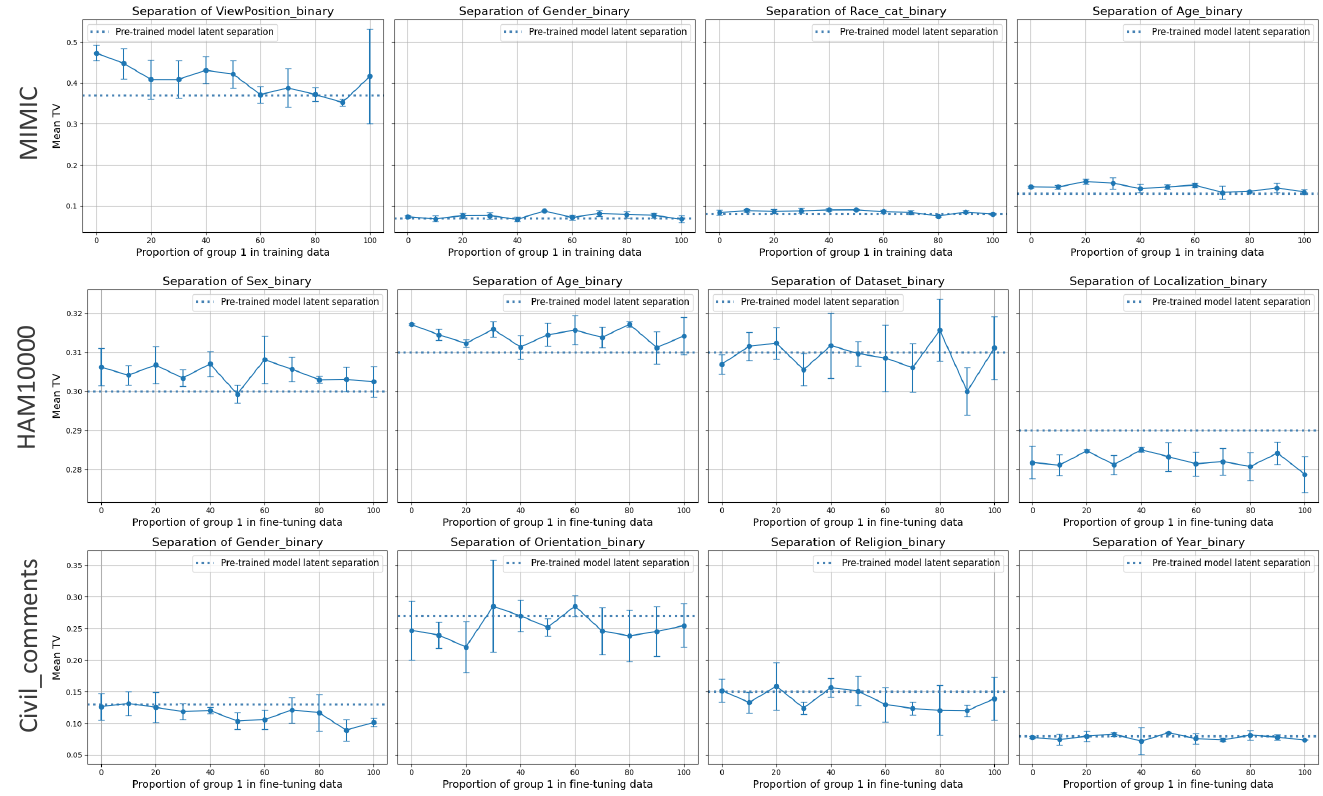}%
    }
    \caption{Examples showing latent representation separations are stable across subgroup allocations under full fine-tuning {and close to separation of the initial pre-trained model.}}
    \label{fig:reps at varying allocs}
\end{figure}

\begin{figure}[h]
    \centering
    \includegraphics[width=0.9\linewidth]{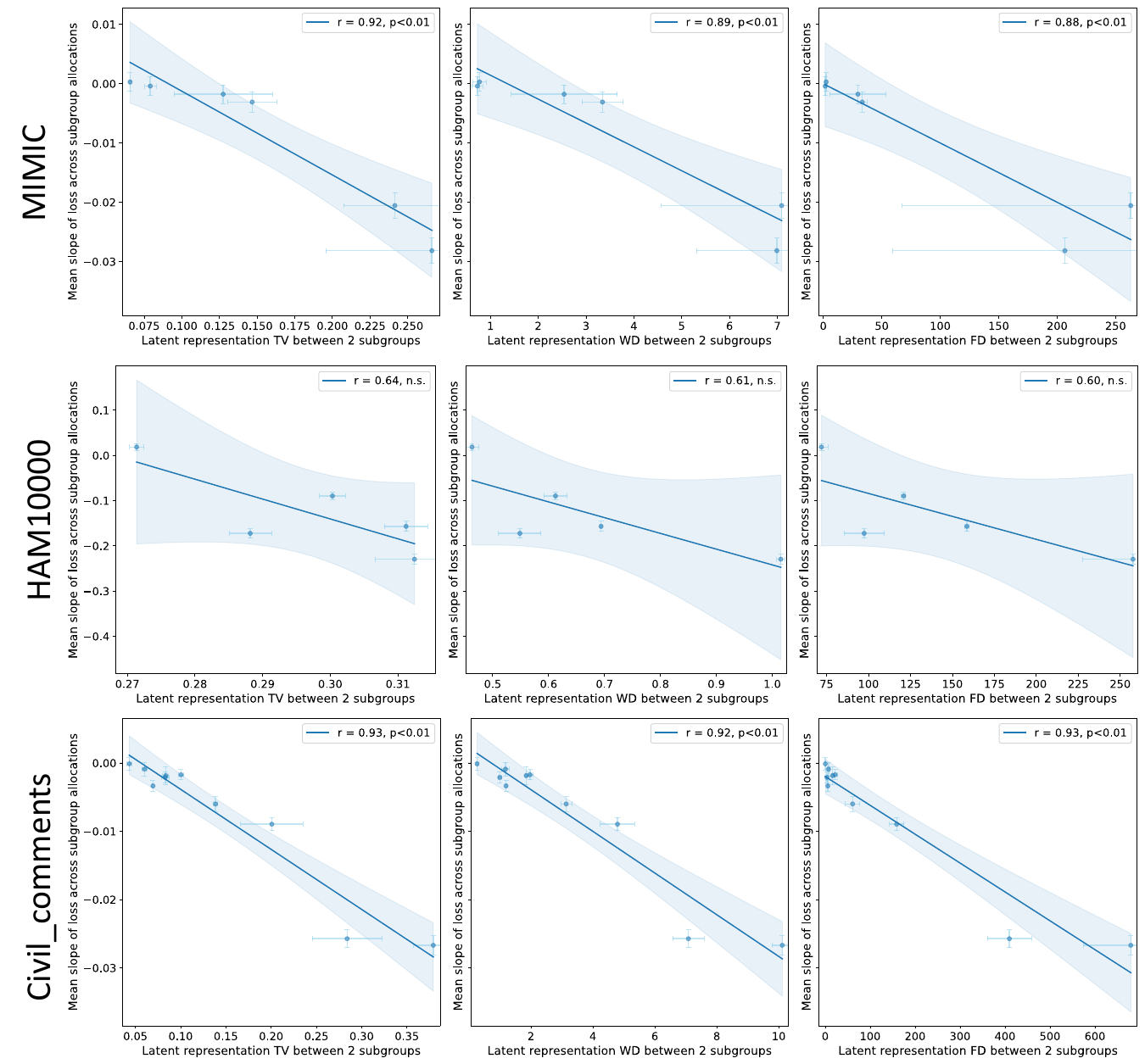}
    \caption{Correlation between loss slope and distance metrics remains strong under full fine-tuning. Each dot represents mean distance (total variation distance, wasserstein distance, or Fréchet distance) and loss slope for one subgroup, averaged across 9 fine-tuning runs, with bars corresponding to standard deviations, and Pearson correlation coefficients also shown.}
    \label{fig:correlation slope distances full fine-tuning}
\end{figure}

\begin{figure}[h]
    \centering
    \includegraphics[width=1\linewidth]{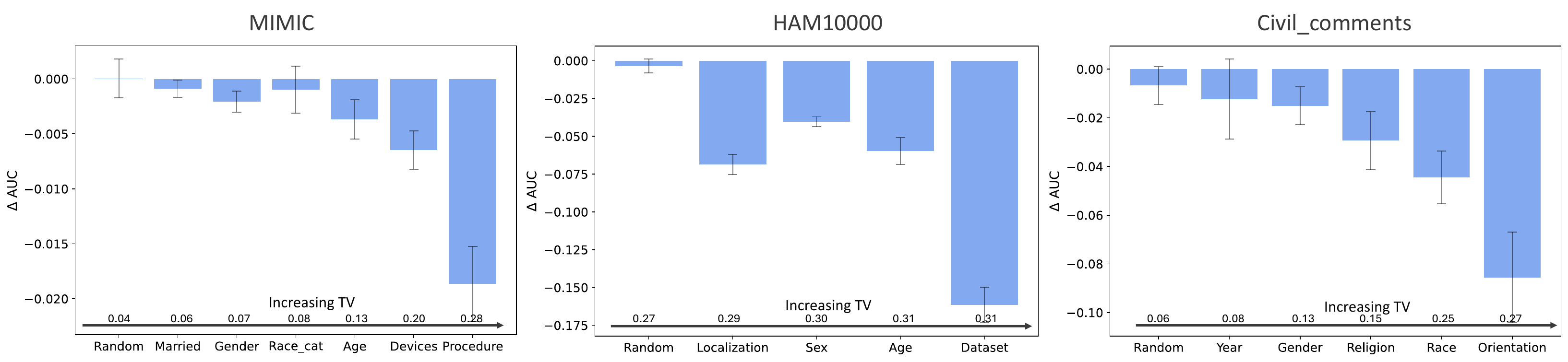}
    \caption{Representation distances continue to predict AUC gaps at extreme subgroup allocations under full fine-tuning at 100\% allocation vs 0\%. Bars represent mean and standard deviation of the AUC across 9 fine-tuning runs.}
    \label{fig: generalisation and reps full fine-tuning}
\end{figure}

{\subsection{Training from scratch}
\label{sec:training-from-scratch}
We also test whether the our hypothesis holds when only doing one round of training from scratch. We train the same models from scratch while systematically varying subgroup training data allocation and measure how much performance changes across allocations. We find similar trends in allocation sensitivity, but of a much greater magnitude (e.g., loss slopes are approximately 10x steeper than in last-layer fine-tuning), as expected since the models can vary more. We next explore whether this is linked to latent representation separation, using the model trained on the natural dataset proportions to extract latent embedding vectors. We find very similar results to fine-tuning, with a strong significant correlation between latent representation separation and sensitivity to subgroup allocation (Figures~\ref{fig:correlation slope distances scratch} and \ref{fig: generalisation and reps scratch}). While this finding is less clearly actionable under this setup (e.g., cannot directly inform subsequent data collection efforts), it provides a strong explanation as to the behaviour of models trained under different subgroup allocations.}

\begin{figure}[h]
    \centering
    \fcolorbox{white}{white}{%
    \includegraphics[width=1\linewidth]{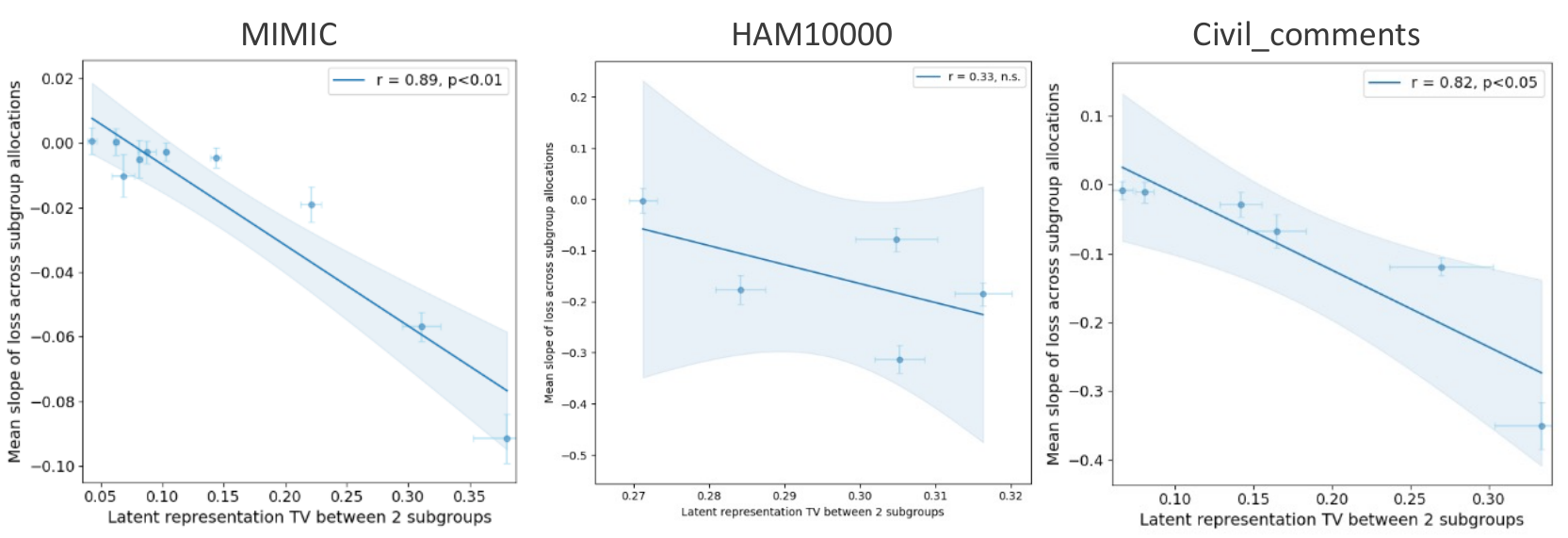}%
    }
    \caption{{Correlation between loss slope and distance metrics remains strong when training from scratch. Each dot represents mean total variation distance and loss slope for one subgroup, averaged across 3 training runs, with bars corresponding to standard deviations, and Pearson correlation coefficients also shown.}}
    \label{fig:correlation slope distances scratch}
\end{figure}

\begin{figure}[h]
    \centering
    \fcolorbox{white}{white}{%
    \includegraphics[width=1\linewidth]{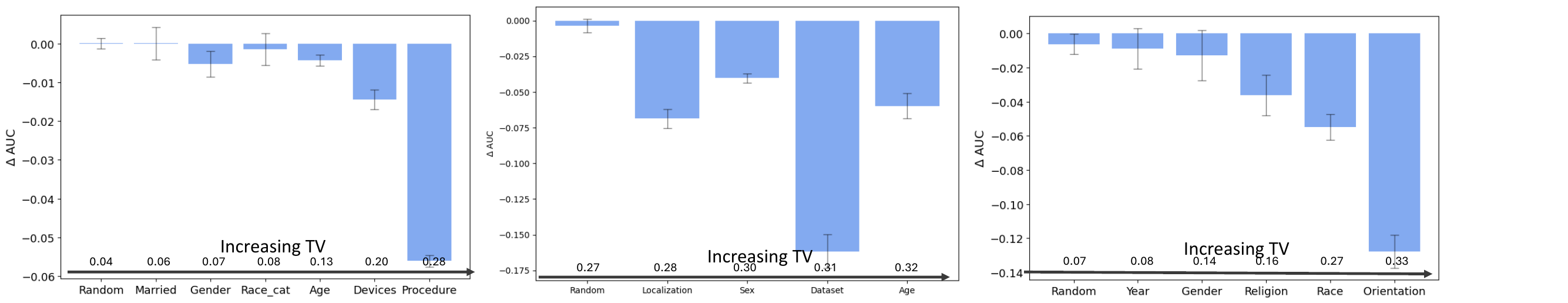}%
    }
    \caption{{Representation distances continue to predict AUC gaps at extreme subgroup allocations when training from scratch at 100\% allocation vs 0\%. Bars represent mean and standard deviation of the AUC across 3 training runs.}}
    \label{fig: generalisation and reps scratch}
\end{figure}
\newpage
\section{Supplementary results on foundation model fine-tuning}
\label{sec: appendix-vlm-finetuning}
\subsection{Supplementary experimental details}
We test whether our results hold in the much less controlled setting of foundation model fine-tuning. {We experiment with two vision-language models, CheXagent \citep{chen2024visionlanguagefoundationmodelenhance}, and RAD-DINO-MAIRA-2 \citep{perezgarcia2024raddino}. Both of these rely on fundamentally different training mechanisms. CheXagent is trained on images and text with SigLIP training while RAD-DINO-MAIRA-2 is only trained on images with a DINO-based setup, providing two distinct embedding regimes to assess the robustness of our hypothesis.}

{We first pass the MIMIC-CXR images through the image encoders\\\texttt{XraySigLIP\_\_vit-l-16-siglip-384\_\_webli} and \texttt{rad-dino-maira-2} and obtain 1024- and 768-dimensional embeddings respectively.
We then train a single classification layer on each of the 16,000 MIMIC-CXR training embeddings, varying allocation in the same way as previous experiments. We test the classification model on the MIMIC-CXR test set (which, to the best of our understanding, neither model has been trained on).
}

\subsection{Supplementary results}
We show differences in latent representation separations in both foundation models in Figure~\ref{fig:TV differences vlm}. 
\begin{figure}[h]
    \centering
    \fcolorbox{white}{white}{%
    \includegraphics[width=1\linewidth]{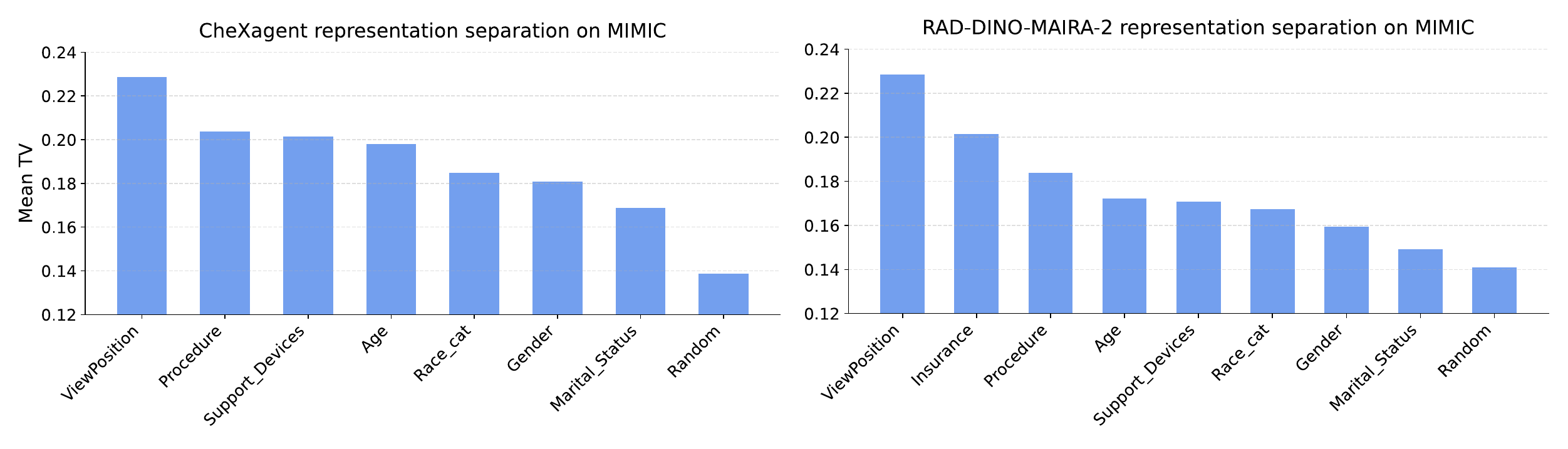}%
    }
    \caption{Total variation distances of CheXagent (left) and MAIRA-2 (right) embeddings of MIMIC images across subgroups.}
    \label{fig:TV differences vlm}
\end{figure}

\begin{figure}
    \centering
    \fcolorbox{white}{white}{%
    \includegraphics[width=0.55\linewidth]{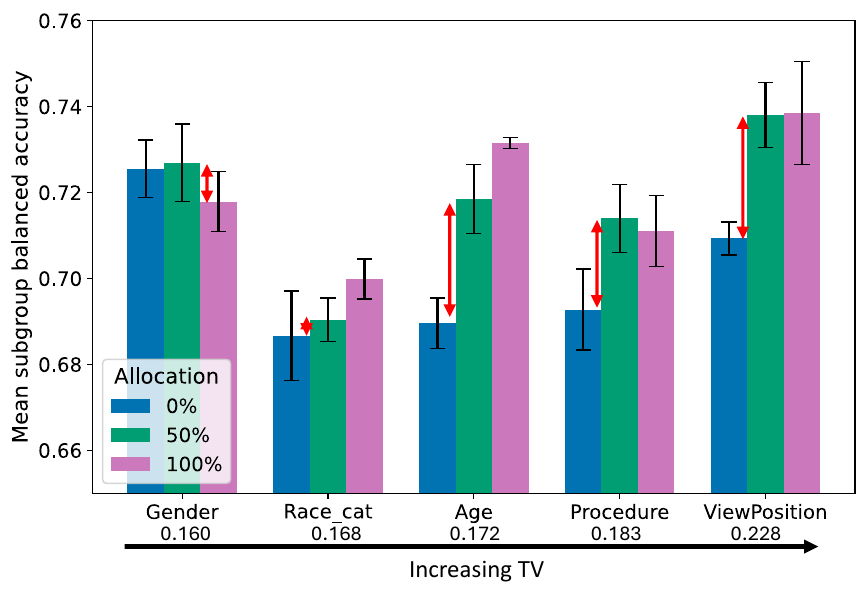}%
    }
    \caption{In RAD-DINO-MAIRA-2 foundation model fine-tuning, selecting a balanced allocation for imaging subgroups increases subgroup accuracy by over 0.03, while it has less importance for demographic subgroups, as predicted by their reduced total variation distance. We report mean accuracy and standard deviations for 3 fine-tuning runs, with red arrows for potential gains from balanced allocations.}
    \label{fig: maira-2 acc gains}
\end{figure}

\begin{figure}[h]
    \centering
    \includegraphics[width=1\linewidth]{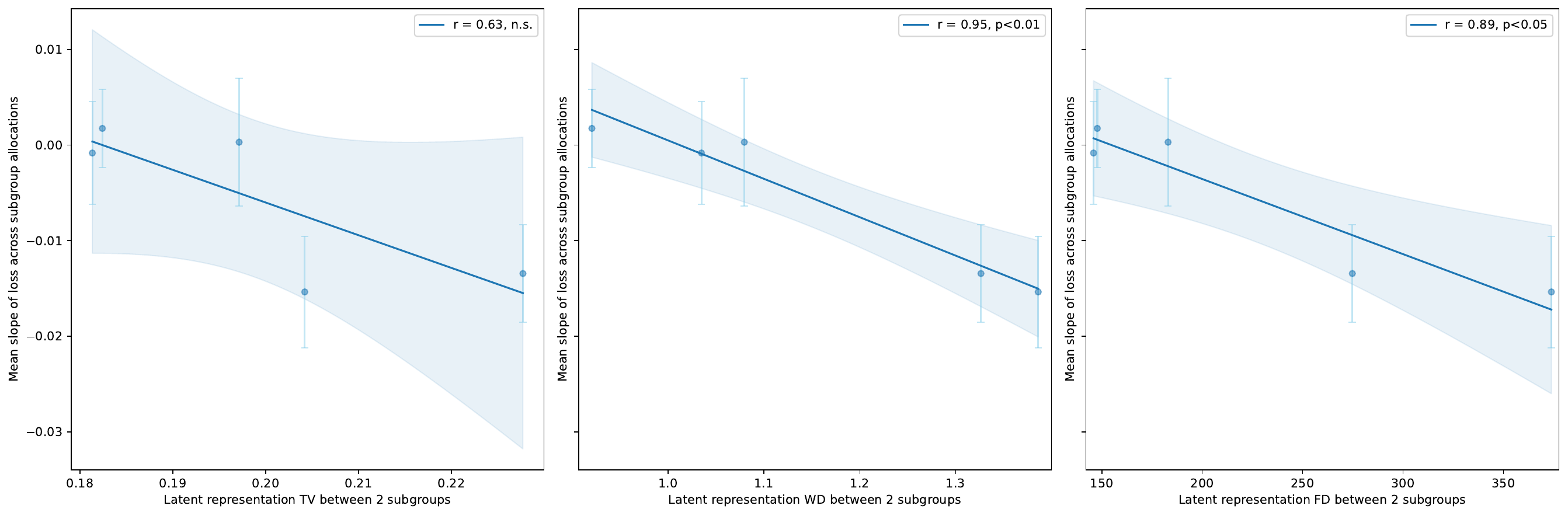}
    \caption{Correlation between latent representation separation and subgroup allocation sensitivity in CheXagent (vision-language model) fine-tuning. Results confirm that our hypothesis generalises beyond task-specific models. Each dot represents mean distance and loss slope for one subgroup, averaged across 9 fine-tuning runs, with bars corresponding to standard deviations, and Pearson correlation coefficients also shown.}
    \label{fig:vlm distances slope}
\end{figure}

\begin{figure}[h]
    \centering
    \fcolorbox{white}{white}{%
    \includegraphics[width=1\linewidth]{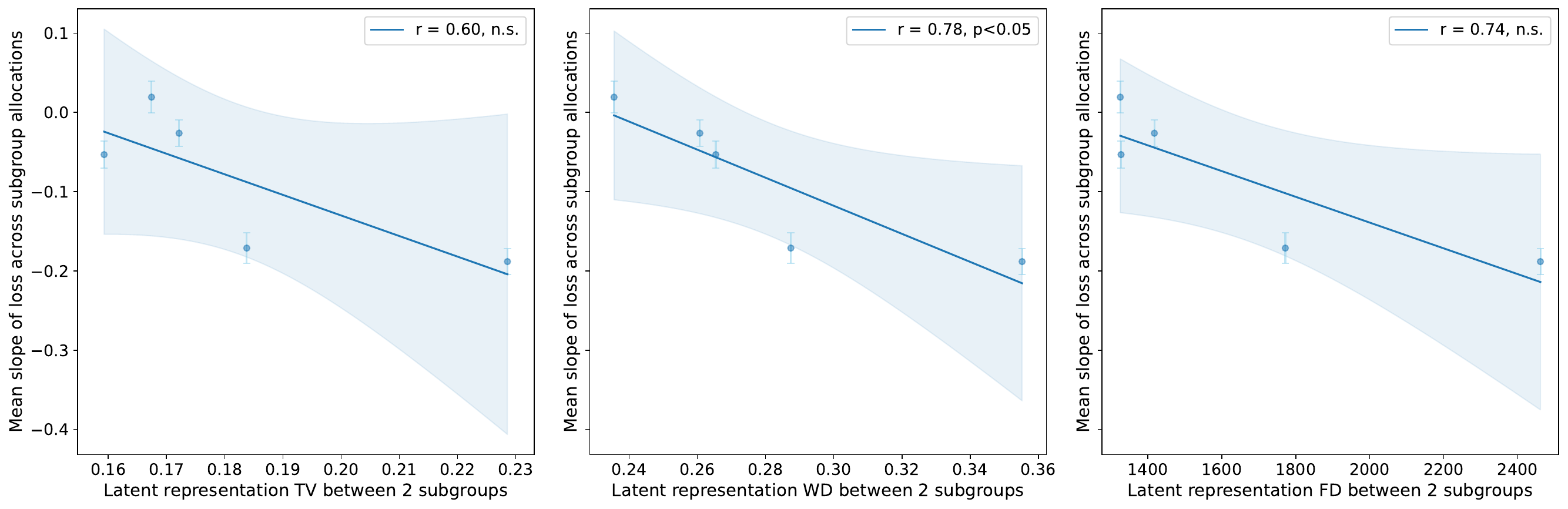}%
    }
    \caption{Correlation between latent representation separation and subgroup allocation sensitivity in MAIRA-2 (vision-language model) fine-tuning. Results confirm that our hypothesis generalises beyond task-specific models. Each dot represents mean distance and loss slope for one subgroup, averaged across 9 fine-tuning runs, with bars corresponding to standard deviations, and Pearson correlation coefficients also shown.}
    \label{fig:maira-2 distances slope}
\end{figure}

\newpage
\section{Ablations on fine-tuning budget $K$}
\label{sec: appendix-finetuning-budget}
We repeat the MIMIC experiments with a smaller fine-tuning budget $K$ to explore whether changing sample size modifies our results. We find that smaller budgets show slightly higher absolute magnitudes of allocation sensitivity. This is most likely due to larger sample sizes allowing the model to learn more robust disease representations which generalise better across allocations. However, correlations are equally significant across fine-tuning budgets, suggesting our latent representation hypothesis is valid across dataset sizes.

\begin{figure}[h]
    \centering
    \includegraphics[width=1\linewidth]{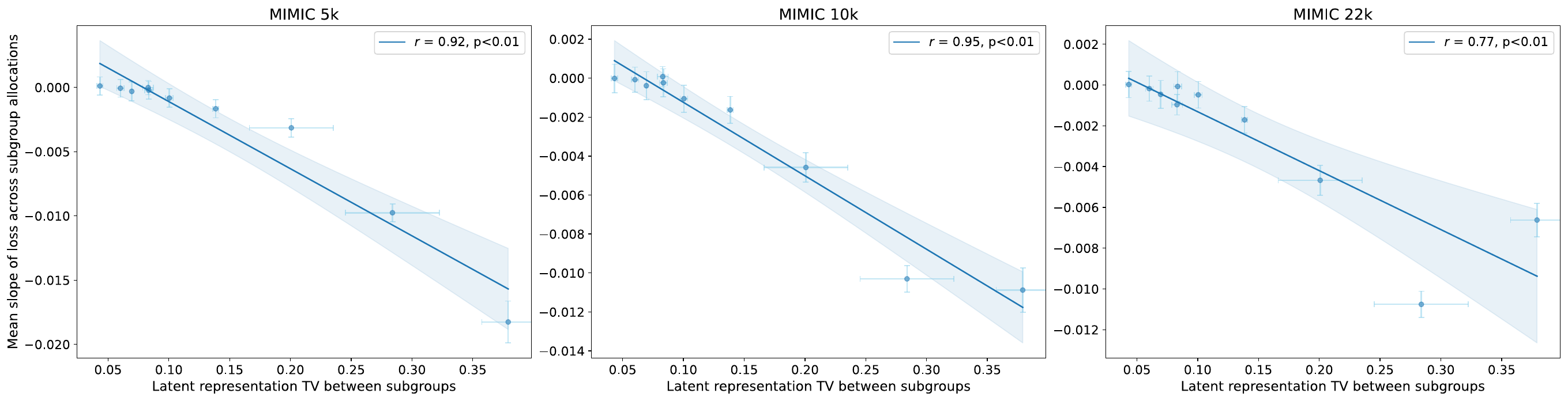}
    \caption{Correlation between representation separation and subgroup allocation sensitivity is strong across fine-tuning dataset sizes in MIMIC. Each dot represents mean distance and loss slope for one subgroup, averaged across 9 fine-tuning runs, with bars corresponding to standard deviations, and Pearson correlation coefficients also shown.}
    \label{fig:mimic tv corr acros sizes}
\end{figure}

\newpage
{\section{Implementation of TV-based Latent Regularisation}}
\label{sec: appendix-operationalisation}

{During pre-training, we augment the standard cross-entropy loss with a
differentiable surrogate for the TV distance between
the embedding distributions of the two view-position groups. For each
mini-batch, we compute a Mahalanobis-squared distance between the
group-wise mean last-layer embeddings within each label (i.e.,
conditioned on $Y$), and average these distances. This
quantity acts as a smooth proxy for the TV divergence and is scaled by
a regularisation hyperparameter $\lambda$. The resulting loss is
\[
\mathcal{L}
= \mathcal{L}_{\text{task}}
+ \lambda \, \widehat{\mathrm{TV}}(Z \mid A, Y),
\]
where $A$ denotes the view-position group and $Z$ the last-layer
embeddings. No gradient is taken through density estimates; the
surrogate is computed directly from batch statistics and therefore adds
minimal computational overhead. In practice, we use $\lambda = 0.01$, as this gives approximately equal weighting to the CE objective and the distance estimate (which is on the order of 100). To evaluate sensitivity to subgroup allocation, we fine-tune this model at with different view proportions as in our standard pipeline.}

\end{document}